\documentclass{article}
\PassOptionsToPackage{numbers,compress}{natbib}

\usepackage[preprint]{neurips_2026}
\usepackage{wrapfig}

\usepackage[utf8]{inputenc} 
\usepackage[T1]{fontenc}    
\usepackage[colorlinks=true, linkcolor=black, citecolor=black, urlcolor=black]{hyperref} 
\usepackage{url}            
\usepackage{booktabs}       
\usepackage{amsfonts}       
\usepackage{nicefrac}       
\usepackage{microtype}      
\usepackage{xcolor}         
\usepackage{graphicx}
\usepackage{algorithm}
\usepackage{algpseudocode}

\usepackage{multirow}

\usepackage{caption}
\usepackage{amsmath}
\usepackage{amsthm}
\usepackage{amssymb}
\usepackage[capitalise]{cleveref}

\usepackage{url}
\usepackage{verbatim}
\usepackage{graphicx}
\usepackage{tikz}

\usepackage[table]{xcolor}
\usepackage{threeparttable}
\definecolor{myred}{HTML}{EF476F}
\definecolor{myblue}{HTML}{1F64DC}
\definecolor{mygreen}{HTML}{02c39a}

\usepackage{array}
\usepackage{textcomp}
\usepackage{stfloats}

\DeclareMathOperator*{\argmax}{argmax}
\DeclareMathOperator*{\argmin}{argmin}
\newtheorem{theorem}{Theorem}
\newtheorem{proposition}{Proposition}

\newcommand{\bs}{\mathbf{s}}
\newcommand{\bx}{\mathbf{x}}
\newcommand{\bn}{\mathbf{n}}
\newcommand{\smath}[1]{{\small$#1$}}

\title{Enabling Unsupervised Training of Deep EEG Denoisers With Intelligent Partitioning}

\author{%
  Qiyu Rao\thanks{Corresponding author}$^{\,\,\,, 1}$ \quad Haozhe Tian$^{2}$ \quad Homayoun Hamedmoghadam$^{1}$ \quad Danilo Mandic$^{1}$\\
  $^{1}$Department of Electrical and Electronic Engineering, Imperial College London\\
  $^{2}$Dyson School of Design Engineering, Imperial College London\\
  \texttt{\{kianna.rao21, haozhe.tian21, h.hamed, danilo.mandic\}@imperial.ac.uk}
}

\begin{document}

\maketitle

\begin{abstract}
Denoising wearable electroencephalogram (EEG) is inherently challenging since neural activity is not only subtle but also inseparable from spectrally overlapping noise artifacts.
Classical signal processing methods, relying on fixed or heuristic rules, cannot handle the time-varying pervasive artifacts in wearable EEGs.
Deep learning methods, on the other hand, show promise in decomposition-free EEG denoising using highly expressive neural networks, but the training requires artifact-free EEG, which is inherently unobtainable.
To address this, we propose Intelligent Partitioning for Self-supervised Denoising (iPSD).
Our method eliminates the need for clean references by learning to partition an input EEG segment into independent noisy realizations with the same underlying signal. This enables self-supervision of deep learning denoisers, even in zero-shot settings where only a single EEG segment to be denoised is available. We validate iPSD through extensive experiments, including validations on wearable EEG from in-ear sensors.
The results show that iPSD achieves state-of-the-art performance, most notably under extremely low signal-to-noise ratios (down to $-10$ dB) and challenging artifacts (e.g., EMG), with spectral fidelity orders of magnitude higher than competitive baselines.
\end{abstract}

\section{Introduction}\label{sec:intro}
Wearable technologies are transforming continuous health monitoring, with significant advances in electrocardiogram (ECG)~\citep{tian2025machine} and photoplethysmogram (PPG)~\citep{zhang2023secure} tracking. However, progress in wearable electroencephalogram (EEG) monitoring has been comparatively slow: on top of the poor electrode-skin contact, motion artifacts, and physiological interference inherent to wearable recordings, the brain signals themselves are also subtle, broadband, and non-stationary~\citep{chu2021ahed, huhn2022impact, rao2025panorama}.

A range of signal processing methods has been proposed to denoise EEG through decomposition and selective reconstruction. 
The discrete wavelet transform decomposes the signal using wavelet bases and reconstructs a denoised version by thresholding wavelet coefficients~\citep{krishnaveni2006automatic, alyasseri2019eeg}. However, this approach struggles when the noise spectrum overlaps with that of the EEG, as is the case for electromyographic (EMG) artifacts from muscle contractions, a prevalent noise source in wearable recordings. Mode decomposition methods partially alleviate this by decomposing the signal into data-driven mode functions~\citep{huang1998empirical, dragomiretskiy2013variational}. However, they are sensitive to hyperparameter choices and abrupt perturbations common in wearable EEG.

Deep learning (DL) offers a compelling alternative to decomposition-based denoising, as highly expressive neural networks can directly map a noisy EEG to its clean counterparts without any fixed or heuristic basis~\citep{yang2016removal, sun2020novel, jurczak2022implementation, chuang2022ic}. However, supervised training of DL denoisers requires clean reference recordings, which are unattainable since neural signals are inseparable from physiological and environmental noise at the measurement point. Self-supervision, using only independent noisy realizations, has proven effective for denoising images with the hugely successful Noise2Noise (N2N)~\citep{lehtinen2018noise2noise} method. N2N learns to restore the clean image by training a neural network to map one noisy realization of the image to another. \citet{zs-N2N} later extended N2N to settings without multiple measurements of the same image via interleaved downsampling, which assigns alternating pixels to two sub-images to form a pair of noisy realizations. This heuristic, however, does not straightforwardly transfer to EEG: unlike on locally smooth images, EEG's rapid oscillatory structure causes interleaved downsampling to yield realizations with distinct underlying clean signals, thus violating N2N's core assumption.

To address the challenge of self-supervised restoration of clean EEG signals, we propose Intelligent Partitioning for Self-supervised Denoising (iPSD). The core idea is to learn to assign samples of a noisy input into two sub-signals with the same underlying clean signal. (Rather than assignment via a predetermined rule, such as in interleaved downsampling, the intelligent partitioning enables learning a flexible, signal-specific assignment.) The two sub-signals then act as a pair of noisy realizations for N2N-style self-supervised denoising.
The intuition is that, for two sub-signals sharing the same clean signal but carrying independent noise, the noise in one is unpredictable from the other; so a denoiser network trained to map one to the other is driven to converge to output their shared component, i.e., the clean signal. 
In iPSD, the partitioning is optimized via reinforcement learning~(RL), jointly with the denoiser, using the negative of the converged denoising loss as the reward signal. We show in \cref{sec:train} that, under reasonable assumptions, this drives the partitioning module toward sub-signals that are maximally informative of one another’s underlying signal. We further develop a zero-shot variant, named iPSD-Zero, that recovers the clean signal from a single noisy segment without any prior training.

We evaluate iPSD against multiple baselines on EEG signals corrupted by White Gaussian Noise (WGN) and real-world EMG artifacts---across noise levels up to ten times the signal power---as well as real noisy wearable EEG data from custom-built in-ear sensors~\citep{goverdovsky2017hearables, nakamura2017ear}. Across all conditions, iPSD consistently outperforms other methods with spectral fidelity orders of magnitude higher than established baselines. On a downstream sleep-stage classification task, wearable EEG denoised by iPSD achieves accuracy comparable to clinical-grade scalp EEG, underscoring the method's practical value for accessible health monitoring. To the best of our knowledge, iPSD is the first self-supervised EEG denoising method to demonstrate such effectiveness across noise types, SNR levels, and real-world wearable recordings.

\section{Methodology}\label{Sec:Methodology}
Let $\bx\in\mathbb{R}^L$ denote a clean EEG signal of length $L$, and let $\bs=\bx+\bn$ be a noise-corrupted measurement of $\bx$, where $\bn \in\mathbb{R}^L$ denotes additive noise that is independent of $\bx$, i.e., $\mathbb{E}\left[\bn^\top \bx\right]=0$. Denoising then refers to recovering the underlying clean signal $\bx$ from its noisy measurement $\bs$. We focus on deep-learning denoisers, which employ expressive neural networks to directly map the noisy $\bs$ to its clean estimate $\hat{\bx}$, bypassing the fixed or heuristic bases of classical signal processing.

\subsection{iPSD formulation}
In most practical scenarios, the clean reference $\bx$ is unavailable, making supervised training of the denoiser unfeasible. Unlike images or audio, where clean reference signals can be synthetically generated, EEG recordings are inherently contaminated. This is because the true neural signal $\bx$ is inseparable from physiological and environmental noise at the point of acquisition, no matter how precise the measurement device. The design of iPSD is optimized for this fundamentally constrained setting, where the true target is physically inaccessible rather than merely scarce.

\begin{figure}[t]
    \centering
    \includegraphics[width=1\linewidth]{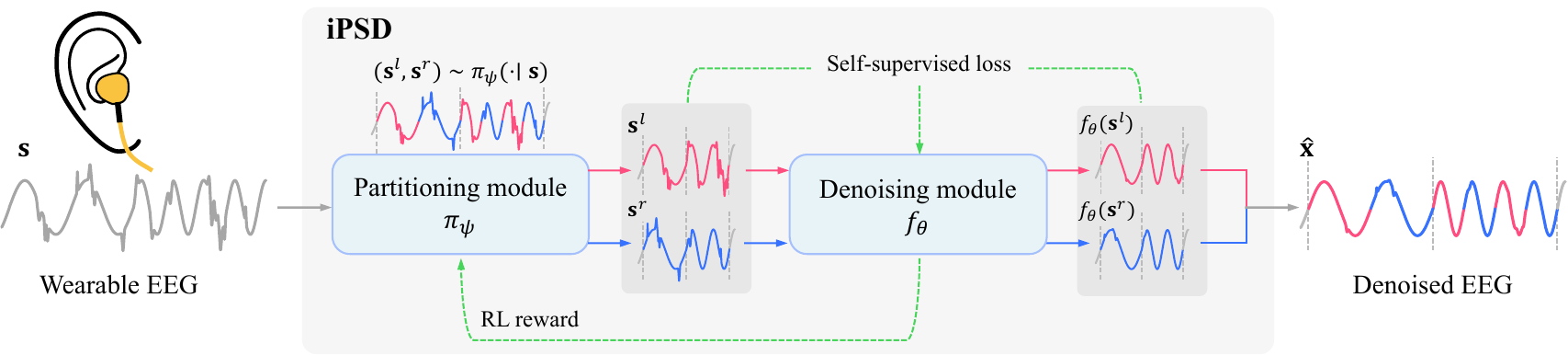}
    \caption{Schematic overview of iPSD. The partitioning module $\pi_{\psi}$ splits the input signal into two sub-signals, $\mathbf{s}^l$ and $\mathbf{s}^r$, which are denoised by the denoising module $f_{\theta}$. Training of $f_{\theta}$ uses the self-supervised N2N loss, $\left\| f_{\theta}(\mathbf{s}^{l})-\mathbf{s}^{r} \right\|_{2}^{2} + \left\| f_{\theta}(\mathbf{s}^{r})-\mathbf{s}^{l} \right\|_{2}^{2}$. The negative of the converged $f_{\theta}$ loss is then used as a reward to update $\pi_{\psi}$. We alternate the updates of $\pi_{\psi}$ and $f_{\theta}$ until convergence.
    }
    \label{fig:metho}
\end{figure}

The schematic in~\cref{fig:metho} depicts the architecture of iPSD, which consists of two learnable modules: the \emph{partitioning module} takes a noisy input $\bs$ and uses a neural network $\pi_\psi$ parametrized by $\psi$ to generate a pair of noisy sub-signals $(\bs^l,\bs^r)$; and the \emph{denoising module} uses a neural network $f_\theta$ parametrized by $\theta$ to map a noisy signal to its clean estimate. The parameters of the two neural networks are jointly optimized via the following self-supervised objective
\begin{equation}
    \theta^\star, \psi^\star = \argmin_{\theta, \psi}\mathbb{E}_{(\bs^l, \bs^r) \sim \pi_\psi(\cdot\mid\bs)}\left[ \left\| f_{\theta}(\bs^{l})-\bs^{r} \right\|_{2}^{2} + \left\| f_{\theta}(\bs^{r})-\bs^{l} \right\|_{2}^{2}\right].
\label{eq:opt}
\end{equation}

\subsubsection{Self-supervised denoising}
The objective in~\cref{eq:opt} trains the denoising model $f_\theta$ using only the noisy sub-signals $(\bs^l,\bs^r)$ with no clean reference required. This self-supervision is valid when the two sub-signals share the same underlying clean signal: the independent noise components in $\bs^l$ and $\bs^r$ are each unpredictable from the other, so $f_\theta$ trained to map one sub-signal to the other is driven to output their shared component, i.e., the clean signal. The following theorem, adapted from~\citep{zs-N2N}, formalizes this intuition:
\begin{theorem}
\label{theorem:eq}
Suppose \((\bs^l, \bs^r) \sim \pi_\psi(\cdot \mid \bs)\) are independent noisy realizations of the same underlying signal \(\bx\). Then the \(\theta^\star\) that optimizes the self-supervised loss in \cref{eq:opt} also minimizes the expected $L2$ distance between the network outputs and the ground-truth \(\bx\), i.e.,
\begin{equation*}
\theta^\star = \argmin_{\theta}\, \mathbb{E}
\Big[ \big\| f_{\theta}(\bs^l) - \bx \big\|_{2}^{2} + \big\| f_{\theta}(\bs^r) - \bx \big\|_{2}^{2} \Big].
\end{equation*}
\end{theorem}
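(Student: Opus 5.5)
The plan is to prove the statement with the standard Noise2Noise decomposition: expand each term of the self-supervised loss around the clean signal and show that the two objectives differ only by a constant independent of $\theta$. Writing $\bs^l = \bx + \bn^l$ and $\bs^r = \bx + \bn^r$, the hypothesis "independent noisy realizations of the same underlying signal" will be formalized as two conditions. First, the noises are zero-mean given the signal. Second, $\bn^r$ is independent of $\bs^l$, and $\bn^l$ is independent of $\bs^r$, conditionally on $\bx$. For fixed $\psi$ the expectation in \cref{eq:opt} is over the joint law of $(\bx,\bn,\bs^l,\bs^r)$, and only $\theta$ varies.

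For the first term I will write
\begin{equation*}
\big\| f_\theta(\bs^l)-\bs^r\big\|_2^2 = \big\| f_\theta(\bs^l)-\bx\big\|_2^2 - 2\,\big(f_\theta(\bs^l)-\bx\big)^\top \bn^r + \big\|\bn^r\big\|_2^2 .
\end{equation*}
Taking expectations, I will handle the cross term by conditioning on $(\bx,\bs^l)$. The quantity $f_\theta(\bs^l)-\bx$ is then fixed. By conditional independence and zero mean, $\mathbb{E}[\bn^r\mid \bx,\bs^l]=\mathbb{E}[\bn^r\mid\bx]=0$, so the tower property makes the cross term vanish. The last term, $\mathbb{E}\|\bn^r\|_2^2$, does not depend on $\theta$. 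The symmetric argument, with $l$ and $r$ swapped, applies to the second term.

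Summing the two terms gives
\begin{equation*}
\mathcal{L}_{\mathrm{N2N}}(\theta) = \mathbb{E}\Big[\|f_\theta(\bs^l)-\bx\|_2^2+\|f_\theta(\bs^r)-\bx\|_2^2\Big] + C ,
\qquad C=\mathbb{E}\|\bn^l\|_2^2+\mathbb{E}\|\bn^r\|_2^2 .
\end{equation*}
Since the two objectives differ by a $\theta$-independent constant, they have identical argmin sets, which is the claim. I will also note that finiteness of second moments is needed for the expansion to be legitimate.

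The main obstacle is making the assumption precise rather than doing the algebra. The partition is drawn from $\pi_\psi(\cdot\mid\bs)$, so the sub-signals are selected based on the full noisy signal. This selection could in principle correlate $\bn^r$ with $\bs^l$, since the sample assignment depends on the noise values. I will therefore state explicitly that the theorem's hypothesis includes conditional independence and zero mean of $\bn^r$ given $(\bx,\bs^l)$, and likewise for $\bn^l$, under the partition distribution. The theorem then reduces to the cross-term cancellation above. If only $\mathbb{E}[\bn^r\mid\bx,\bs^l]=0$ holds, without full independence, the same argument still goes through, and I would mention this weaker sufficient condition.
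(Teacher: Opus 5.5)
Your proposal is correct and takes essentially the same route as the paper's proof: write $\bs^l=\bx+\bn^l$ and $\bs^r=\bx+\bn^r$, expand each squared error around $\bx$, show the cross term vanishes, and conclude that the two objectives differ only by the $\theta$-independent constant $\mathbb{E}\|\bn^l\|_2^2+\mathbb{E}\|\bn^r\|_2^2$. The one difference is in how the hypothesis is formalized: the paper assumes the noises are zero-mean, mutually independent and independent of $\bx$, whereas your condition $\mathbb{E}[\bn^r\mid\bx,\bs^l]=\mathbf{0}$ (used with the tower property) is more careful, speaks to the partition-selection concern you raised, and matches the assumption the paper itself adopts in its reward-justification appendix.
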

The proof of~\cref{theorem:eq} is provided in~\cref{ap:eq_proof}. This theorem shows that, given the partition $(\bs^l, \bs^r)$, $f_\theta$ can be optimized via gradient descent on the loss
\begin{equation}
    \mathcal{L}(\theta)=\left\| f_{\theta}(\bs^{l})-\bs^{r} \right\|_{2}^{2} + \left\| f_{\theta}(\bs^{r})-\bs^{l} \right\|_{2}^{2}.
\label{eq:n2nloss}
\end{equation}
Crucially, the validity of this self-supervised training scheme hinges on the two sub-signals $(\bs^l, \bs^r)$ sharing the same underlying clean signal. Achieving this is particularly challenging for EEG, whose non-stationarity and fast oscillatory structure demand a flexible, targeted partitioning strategy. This motivates the learnable partitioning module at the core of iPSD.

\subsubsection{Intelligent partitioning}
\label{sec:ip}
Here, we describe the partitioning module that obtains the pair $(\bs^{l}, \bs^{r})$ from the input noisy signal $\bs$ (see~\cref{fig:partition}). Two ideas are combined here: (i) the input is segmented into small windows where the underlying signal is approximately stationary, and (ii) the model \(\pi_\psi\) learns how to best partition each window to optimize the denoising performance by stochastically exploring all candidate partitions.

Formally, the input $\bs\in\mathbb{R}^L$, with index set \smath{I=\{1,\cdots,L\}}, is segmented into non-overlapping local windows of an appropriate even length $W$ (that divides $L$). The index set of the $k$-th window is $I^{(k)}=\{(k-1)W+1,\cdots,kW\}$, with \smath{k=1,\cdots,L/W}. The partitioning model \(\pi_\psi\) learns to extract from each $I^{(k)}$ (corresponding to window $k$), a subset $I^{(k),l}$, constrained to cardinality $W/2$ (and thus its complement \smath{I^{(k),r} = I^{(k)} \setminus I^{(k),l}}). We use \((\bs^l, \bs^r) \sim \pi_\psi(\cdot \mid \bs)\) to denote the extraction of the signal pair $(\mathbf{s}^{l}, \mathbf{s}^{r})$ according to the stochastic policy $\pi_\psi$: \smath{\mathbb{R}^L \rightarrow \mathcal{P}(I^l)} by  
\begin{equation}
    \bs^l = [\bs_i]_{i \in I^l} \in \mathbb{R}^{L/2}, 
    \qquad 
    \bs^r = [\bs_i]_{i \in I^r} \in \mathbb{R}^{L/2},
\label{eq:partition}
\end{equation} 
where \smath{I^l = \bigcup_{k=1}^{L/W} I^{(k),l}} and \smath{I^r = \bigcup_{k=1}^{L/W} I^{(k),r}}.

\begin{figure}[t]
\centering
\includegraphics[width=1\linewidth]{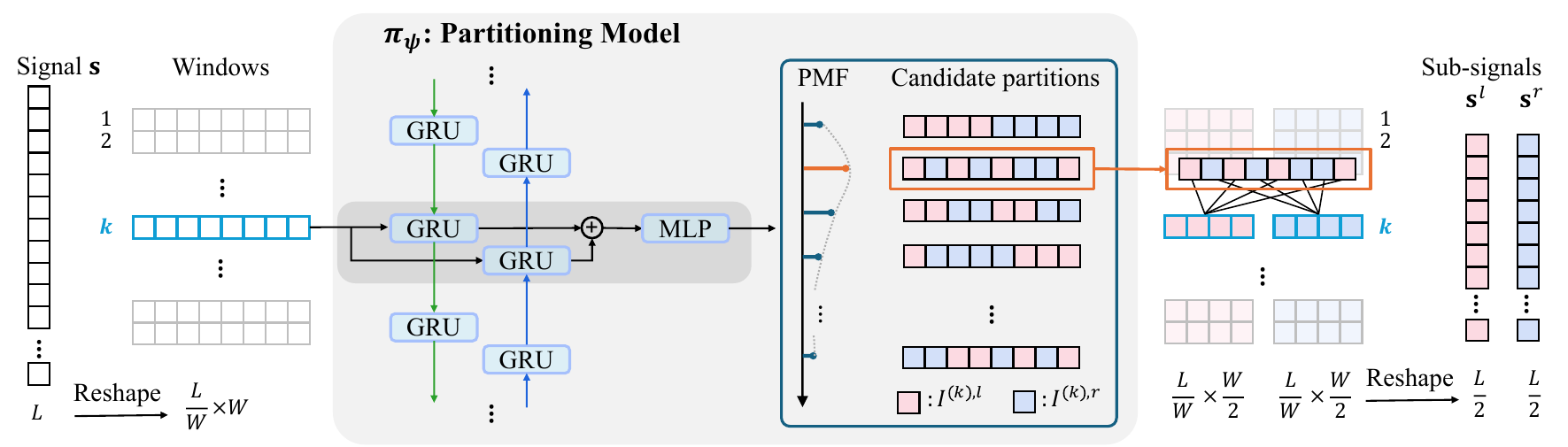}
\caption{Workflow of the partitioning module. The input signal $\bs$ of length $L$ is reshaped into $L/W$ non-overlapping windows of length $W$, each admitting $\tbinom{W}{W/2}/2$ candidate partitions into two equal-length subsets. The model $\pi_\psi$ selects a partition for each window, and the corresponding subsets are concatenated across windows to form the sub-signal pair. The denoising module $f_\theta$ then processes the sub-signals and provides feedback used to optimize $\pi_\psi$ (see~\cref{fig:metho}).}
\label{fig:partition}
\end{figure}

\begin{proposition}
\label{prop_part}
For any signal window \(I^{(k)}\), the within-window partitioning strategy used in iPSD can always achieve a partition that is at least as good as the interleaved partition used in~\citep{zs-N2N}, in terms of the mismatch between the underlying clean components of the sub-signals \((\bs_{I^{(k),l}}, \bs_{I^{(k),r}})\).
\end{proposition}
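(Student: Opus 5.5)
The argument is a containment one: the feasible set of within-window partitions searched by $\pi_\psi$ contains the interleaved partition of~\citep{zs-N2N}. The best partition over the set is therefore no worse than that particular member, for any fixed mismatch criterion.

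First I will formalize the mismatch. For a window $I^{(k)}$ and a balanced split $(A, I^{(k)}\setminus A)$ with $|A|=W/2$, write the two index sets in increasing order, $A=\{a_1<\dots<a_{W/2}\}$ and $I^{(k)}\setminus A=\{b_1<\dots<b_{W/2}\}$. This is consistent with how \cref{eq:partition} stacks entries into $\bs^l,\bs^r$. The clean-component mismatch is
\begin{equation*}
D(A) \;=\; \sum_{j=1}^{W/2} \big(\bx_{a_j}-\bx_{b_j}\big)^2 .
\end{equation*}
I only need $D$ to be a deterministic function of the split. Any alternative mismatch measure, such as an expectation over $\bx$ or a different norm, can be used, because the argument never uses the form of $D$.

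Second, I will check that the interleaved partition is feasible. Within $I^{(k)}=\{(k-1)W+1,\dots,kW\}$, interleaved downsampling assigns the odd-offset indices $A_{\mathrm{int}}=\{(k-1)W+1,(k-1)W+3,\dots,kW-1\}$ to one sub-signal and the even-offset ones to the other. Because $W$ is even, $|A_{\mathrm{int}}|=W/2$, so $A_{\mathrm{int}}$ is one of the $\tbinom{W}{W/2}$ candidate subsets. It also stays one of the $\tbinom{W}{W/2}/2$ partitions after identifying $A$ with its complement, under which $D$ is invariant by symmetry. Because $W$ divides $L$, the union of these per-window choices over $k$ reproduces exactly the global interleaved partition used in~\citep{zs-N2N}. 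This index bookkeeping is the only real step and I expect no serious obstacle. The one care point is the even-length and divisibility hypotheses, which ensure that the alternating pattern does not straddle window boundaries.

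Third, I will conclude. Let $\mathcal{A}_k$ be the finite set of balanced subsets of $I^{(k)}$. Then $\min_{A\in\mathcal{A}_k} D(A)\le D(A_{\mathrm{int}})$, and the minimum is attained because $\mathcal{A}_k$ is finite. Since $\pi_\psi$ is a stochastic policy over $\mathcal{A}_k$ (\cref{sec:ip}), it can place all its mass on a minimizer, and so in particular achieves a partition at least as good as the interleaved one.

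As a remark, I would add an example with an oscillation near the Nyquist rate within the window. In that case the interleaved pairs $(\bx_{2j-1},\bx_{2j})$ have opposite signs, while a partition that pairs phase-matched samples gives a strictly smaller $D$. This shows the inequality can be strict.
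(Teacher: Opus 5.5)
Your proposal is correct and follows the paper's proof. The paper also argues that the interleaved split is one of the $\tbinom{W}{W/2}/2$ candidate within-window partitions, so the best candidate is no worse, and it shows strictness with $\bx_{I^{(k)}}=(1,-1,1,-1)$ and the split $\{1,2\},\{3,4\}$, which is your near-Nyquist remark; your explicit mismatch functional $D$ and your check that an even $W$ dividing $L$ keeps the alternating pattern aligned with window boundaries are bookkeeping the paper leaves implicit, and both are sound.
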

We provide the detailed proof of~\cref{prop_part} in~\cref{ap:prop_part}. 

\subsection{Training iPSD}\label{sec:train}
The combinatorial nature of the partitioning policy $\pi_\psi$ renders the objective in~\cref{eq:opt} non-differentiable with respect to the parameters $\psi$. To address this, iPSD uses an alternating optimization scheme, where $f_\theta$ is optimized via gradient descent and $\pi_\psi$ via RL.

Let $\mathcal{S}$ denote a set of noisy training signals. In each iPSD training iteration, we sample $\bs\in\mathcal{S}$ and use $\pi_\psi$ to partition it into $(\bs^l, \bs^r)$. We then update $f_\theta$ via gradient descent on the loss in~\cref{eq:n2nloss} until convergence, defined as the variance of loss values in the last $10$ steps falling below $10^{-6}$. The negative of the converged denoising loss serves as the reward signal for the RL-based optimization of $\pi_\psi$. To reduce reward variance, we apply an averaging scheme. Suppose $\theta$ converges in $N$ steps, and let $J^d_n$ denote the value of $\mathcal{L}(\theta)$ at step $n$. The reward $R(\bs^l, \bs^r)$ is then defined as:
\begin{equation}
    R(\bs^l, \bs^r) \;=\; -\frac{1}{10}\sum_{n=N-9}^{N} J_n^d \, .
\label{eq:return}
\end{equation}
In \cref{ap:reward_just}, we show that under reasonable assumptions, this RL reward formulation drives $\pi_\psi$ toward $\bs^l$ and $\bs^r$ that are maximally informative of each other's underlying signal.

\begin{algorithm}[t]
\caption{Training iPSD}
\label{alg:pg}
\begin{algorithmic}[1]
\Statex \textbf{Initialization:} Partitioning model $\pi_\psi$;\; training set $\mathcal{S}$;\; learning rates $\eta^\theta$, $\eta^\psi$.
\While{not converged}
  \State Sample $\bs\in\mathcal{S}$ and partition $\bs$ into $(\bs^l, \bs^r)\sim\pi_\psi(\cdot\mid\bs)$
  \State Initialize $f_\theta$;\; set $N=0$
  \While{$\theta$ not converged}
    \State $J^d_n =\left\| f_{\theta}(\bs^{l})-\bs^{r} \right\|_{2}^{2} + \left\| f_{\theta}(\bs^{r})-\bs^{l} \right\|_{2}^{2}$
    \State $\theta\gets\theta - \eta^\theta \nabla_\theta J^d_n$;\; $N=N+1$
  \EndWhile
  \State $\psi\gets\psi + \eta^\psi \nabla_\psi \log\pi_\psi(\bs^l, \bs^r\mid \bs)\, R(\bs^l, \bs^r)$, $\quad R(\bs^l, \bs^r)=-\frac{1}{10}\sum_{n=N-9}^{N}J_n^d$
\EndWhile
\Statex {\footnotesize\textbf{Note:} In practice, a batch of partitions is sampled in parallel and $\psi$ is updated with the average policy gradient.}
\end{algorithmic}
\end{algorithm}

Using the scalar reward $R(\bs^l, \bs^r)$ enables gradient-based optimization of the partitioning policy $\pi_\psi$. To find $\pi_\psi$ that maximizes the reward, we apply the policy gradient theorem (see~\cref{ap:grad_as}), which shows that the gradient of the expected reward with respect to $\psi$ can be written as
\begin{equation}
    \nabla_\psi\,\mathbb{E}_{(\bs^l,\bs^r)}\bigl[R(\bs^l,\bs^r)\bigr] 
    = \mathbb{E}_{(\bs^l,\bs^r)}\!\left[\nabla_\psi\log\pi_\psi(\bs^l,\bs^r\mid\bs)\,R(\bs^l,\bs^r)\right].
\label{eq:policy_grad}
\end{equation}
We therefore update $\pi_\psi$ via gradient ascent on the right-hand side of \cref{eq:policy_grad}.
\Cref{alg:pg} summarizes the full self-supervised denoising procedure. In practice, we employ Proximal Policy Optimization~(PPO)~\citep{schulman2017proximal} and sample a batch of partitions in parallel to further reduce the variance in the observed $R(\bs^l,\bs^r)$, ensuring stable and efficient policy updates.

\textbf{Zero-shot setting}.\quad
As an extension, we develop iPSD-Zero, a variant of iPSD designed for the challenging \emph{zero-shot} setting, where a clean signal is recovered directly from a single noisy recording without any prior training. This capability is particularly valuable for wearable EEG, where instant deployment is desirable, yet large variabilities in individual physiology, noise characteristics, and hardware make it impractical to form a representative training set before deployment. One significant challenge when only the test signal $\bs$ is available is to rapidly identify the optimal partition from a large search space. We address this by enforcing a shared partitioning strategy across all localized windows, reducing the search space to $\binom{W}{W/2}/2$ candidates. Selecting the optimal partition can be naturally formulated as a multi-armed bandit problem, where each candidate partition corresponds to an \emph{arm}. Pulling each arm yields a reward, defined as in~\cref{eq:return}, which is stochastic due to the inherent randomness of training $f_\theta$. The objective of iPSD-Zero is to identify the best arm, measured by the expected reward, with high probability using as few trials as possible; to this end, we employ the lil'~UCB algorithm~\citep{jamieson2014lil}, whose tight confidence bounds make it well-suited to the sample-limited zero-shot setting. By iteratively pulling different arms, iPSD-Zero gradually builds empirical estimates of each arm's expected reward and eventually identifies the arm whose corresponding partition of $\bs$ best allows $f_\theta$ to recover the clean signal. Further details are provided in \cref{ap:ipsd-zs}.

\section{Experiments}\label{Sec:experiments}
We begin with some notes on the implementation\footnote{The code will be released publicly upon acceptance.} of iPSD. For the partitioning module, we use a window length of 8 samples, chosen by ablation (\cref{ap:net_architectures}) to achieve sufficient partitioning flexibility while keeping the search space tractable. The policy network $\pi_\psi$ uses a bidirectional GRU architecture, which outperforms alternatives including MLP, U-Net, and RNN (see~\cref{ap:net_architectures} for detailed comparisons). We train $\pi_\psi$ via PPO using the Adam optimizer with a learning rate of $10^{-4}$, a batch size of 64, and a total of $20{,}000$ update steps; other hyperparameters follow~\citep{huang2022cleanrl}.

For the denoising module, we use the loss function $\mathcal{L}(\theta)$ in~\cref{eq:n2nloss} together with two regularization terms proposed in~\citep{zs-N2N}. Given a partition $(\bs^l, \bs^r) \sim \pi_\psi(\cdot\mid \bs)$, we define functions $\pi^l, \pi^r: \mathbb{R}^L\rightarrow\mathbb{R}^{L/2}$ such that $\bs^l=\pi^l(\bs)$ and $\bs^r=\pi^r(\bs)$. The full loss function for updating $f_\theta$ is
\begin{equation}
\begin{split}
    \mathcal{L}_{\text{full}}(\theta) = \mathcal{L}(\theta) + \| f_{\theta}(\pi^l(\bs))-\pi^l(f_{\theta}(\bs)) \|^{2}_{2} + \| f_{\theta}(\pi^r(\bs))-\pi^r(f_{\theta}(\bs)) \|^{2}_{2},
\end{split}
\end{equation}
where the two regularization terms encourage iPSD's output to obey a consistency property that the underlying clean signal should satisfy: partitioning and then denoising should yield the same result as denoising and then partitioning. Since $f_\theta$ accepts inputs of varying length in $\mathcal{L}_{\text{full}}(\theta)$, we implement it as a lightweight fully convolutional network with three layers and LeakyReLU activations, totaling approximately $7$k parameters. The channel dimensions progress as $1\rightarrow48\rightarrow48\rightarrow1$, with all layers using a kernel size of 3 and padding of 1. The compact architecture of $f_\theta$ leads to fast convergence (in around 200 epochs) and permits parallel training of multiple instances, allowing the partitioning policy $\pi_\psi$ to quickly explore diverse strategies and identify high-quality partitions.

\subsection{Experiments on synthetic data}\label{sec:syn_data}
Here, we benchmark iPSD against state-of-the-art baselines spanning wavelet-based~\citep{alyasseri2019eeg, houamed2020ecg}, mode decomposition-based~\citep{dora2020correlation, colominas2014improved, ranjan2022motion}, hybrid~\citep{kerechanin2022eeg}, and self-supervised DL~\citep{chen2025self} approaches (see a summary in~\cref{ap:baselines}). To quantitatively evaluate performance, we use three complementary metrics standard in signal denoising: signal-to-noise ratio (SNR), peak signal-to-noise ratio (PSNR), and the mean squared error of the power spectrum (Spectral MSE). SNR provides a global measure of time-domain fidelity, PSNR emphasizes robustness to large transient spikes, and Spectral MSE assesses spectral fidelity, which is critical because EEG analysis depends heavily on spectral features such as relative band power, peak frequency, and spectral entropy. Together, these three metrics provide a balanced evaluation covering both the temporal and spectral domains. The formal definitions of all metrics are provided in~\cref{ap:metrics}. Note that the clean signal $\bx$ is used solely for metric computation and is not accessible to any denoising method.

We use EEG signals from the CHB-MIT Scalp EEG Database~\citep{PhysioNet-chbmit-1.0.0, shoeb2009application}, which was collected at the Children’s Hospital Boston and contains over 900 h long-term scalp EEG recordings from 22 pediatric patients (5 males aged 3--22 years; 17 females aged 1.5--19) with intractable seizures. All signals are sampled at $256$ Hz with 16-bit resolution, using the international 10–20 electrode placement system with a bipolar montage. To generate synthetic data, we first split all recordings into 10-second segments, which are long enough for iPSD and the baselines to capture meaningful neural rhythms~\citep{rosiani2021impact}. To create contaminated EEG signals, we introduce two types of noise: WGN and real-world EMG artifacts. The WGN is generated by sampling random values from a Gaussian distribution, while the EMG artifacts are drawn from a real-world dataset~\citep{rantanen2016survey} comprising EMG artifacts recorded from 15 healthy participants (8 female, 7 male; aged 26–57) performing facial movements including chewing, smiling, lip puckering, and frowning. Since the EMG artifacts were recorded at a different sampling rate, we downsample them to $256$ Hz before adding them to the EEG. The synthetic dataset was divided into training and test sets with an 80/20 split.

\begin{table*}[t]
\caption{Comparison of Denoising performance.}
\centering
\renewcommand{\arraystretch}{1.1}
\newcommand{\ex}[1]{{\scriptscriptstyle\times 10^{#1}}}
\footnotesize
\setlength{\tabcolsep}{2.9pt}
\begin{tabular}{l *{6}{c} | *{6}{c}}
\toprule
\noalign{\vspace{-2pt}}
& \multicolumn{6}{c|}{\textbf{WGN}}
& \multicolumn{6}{c}{\textbf{EMG}} \\[-2pt]
\cmidrule(lr){2-7}\cmidrule(lr){8-13}
& \multicolumn{3}{c}{$-5$ dB Input SNR}
& \multicolumn{3}{c|}{$0$ dB Input SNR}
& \multicolumn{3}{c}{$-5$ dB Input SNR}
& \multicolumn{3}{c}{$0$ dB Input SNR} \\[-2pt]
\cmidrule(lr){2-4}\cmidrule(lr){5-7}\cmidrule(lr){8-10}\cmidrule(lr){11-13}
{\footnotesize Method}
  & SNR & PSNR & S-MSE
  & SNR & PSNR & S-MSE
  & SNR & PSNR & S-MSE
  & SNR & PSNR & S-MSE \\[-2pt]
\midrule
mVMD
  & $-5.05$ & $9.05$  & $1.46\ex{4}$
  & $2.18$  & $16.28$ & $1.13\ex{4}$
  & $-6.05$ & $8.05$  & $5.40\ex{4}$
  & $0.93$  & $15.03$ & $3.06\ex{4}$ \\
CEEMDAN
  & $-3.85$ & $10.25$ & $0.98\ex{4}$
  & $4.13$  & $18.23$ & $0.78\ex{4}$
  & $0.25$  & $14.35$ & $0.36\ex{3}$
  & $3.26$  & $17.36$ & $0.22\ex{3}$ \\
FrWT
  & $-3.54$ & $10.56$ & $0.96\ex{4}$
  & $3.43$  & $17.53$ & $0.83\ex{4}$
  & $-0.06$ & $14.04$ & $0.55\ex{3}$
  & $3.53$  & $17.63$ & $0.62\ex{3}$ \\
EMD-LoG
  & $2.28$ & $16.38$  & $0.80\ex{4}$
  & $4.40$  & $18.50$ & $0.80\ex{4}$
  & $0.32$  & $14.42$ & $0.38\ex{3}$
  & $3.55$  & $17.65$ & $0.22\ex{3}$ \\
WPT-ICA
  & $-4.89$ & $9.21$  & $1.63\ex{4}$
  & $1.74$  & $15.84$ & $1.23\ex{4}$
  & $-0.09$ & $14.01$ & $0.66\ex{3}$
  & $2.36$  & $16.46$ & $0.31\ex{3}$ \\
\citeauthor{chen2025self}
  & $2.01$ & $16.11$ & $0.28\ex{4}$
  & $3.25$ & $17.35$ & $0.21\ex{4}$
  & $-0.05$ & $14.05$ & $0.42\ex{3}$
  & $3.01$ & $17.12$ & $0.19\ex{3}$ \\
Optimal-WT
  & $3.34$  & $17.44$ & $0.36\ex{4}$
  & $5.07$  & $19.18$ & $0.27\ex{4}$
  & $0.33$ & $14.42$ & $0.69\ex{3}$
  & $3.89$  & $18.10$ & $0.43\ex{3}$ \\
\midrule
iPSD
  & \cellcolor{myred!20}$3.97$  & \cellcolor{myred!20}$18.07$ & \cellcolor{myred!20}$55.14$
  & \cellcolor{myred!20}$5.78$  & \cellcolor{myred!20}$19.90$ & \cellcolor{myred!20}$41.13$
  & \cellcolor{myred!20}$3.63$  & \cellcolor{myred!20}$17.73$ & \cellcolor{myred!20}$53.04$
  & \cellcolor{myred!20}$5.95$  & \cellcolor{myred!20}$20.05$ & \cellcolor{myred!20}$39.21$ \\
iPSD-Zero
  & \cellcolor{myblue!20}$3.89$ & \cellcolor{myblue!20}$18.00$ & \cellcolor{myblue!20}$58.34$
  & \cellcolor{myblue!20}$5.76$ & \cellcolor{myblue!20}$19.86$ & \cellcolor{myblue!20}$48.67$
  & \cellcolor{myblue!20}$3.27$ & \cellcolor{myblue!20}$17.37$ & \cellcolor{myblue!20}$56.62$
  & \cellcolor{myblue!20}$5.31$ & \cellcolor{myblue!20}$19.41$ & \cellcolor{myblue!20}$44.12$ \\
\bottomrule
\multicolumn{13}{l}{%
  * \colorbox{myred!20}{\raisebox{0pt}[0.5em][0em]{Pink}} /
    \colorbox{myblue!20}{\raisebox{0pt}[0.5em][0em]{Blue}}
    markers denote the best / second-best performance;\, S-MSE denotes the spectral MSE.}
\end{tabular}
\label{tab:mean}
\end{table*}

\Cref{tab:mean} compares the denoising performance of iPSD and its zero-shot variant, iPSD-Zero, against the baselines at input SNR levels of $-5$ dB and $0$ dB. (The zero-shot iPSD is optimized directly on each test signal.) The proposed iPSD consistently and significantly outperforms all baselines across all evaluation metrics, under both WGN and EMG contamination. Compared to the strongest baseline, Optimal-WT, iPSD consistently achieves better denoising performance across all noise conditions, with up to 3.3 dB higher output SNR, corresponding to more than $53\%$ lower residual noise power after denoising. We also observe that iPSD is more robust under low-SNR conditions (at $-5$ dB, signal power is only $\approx32\%$ of noise power), where the baselines show limited denoising capability, with several outputting negative SNRs. Most notably, while baseline spectral MSEs are on the order of thousands under WGN noise and hundreds under EMG noise, the spectral MSE of iPSD remains consistently below 60. (To provide an intuitive view of the quantitative results above, we present visual comparisons of iPSD against Optimal-WT in~\cref{ap:syn_qual}, which highlights the strong spectral fidelity of iPSD.)

\begin{wrapfigure}[17]{r}{0.5\textwidth}
\centering
\vspace{-2em}
\includegraphics[width=0.5\textwidth]{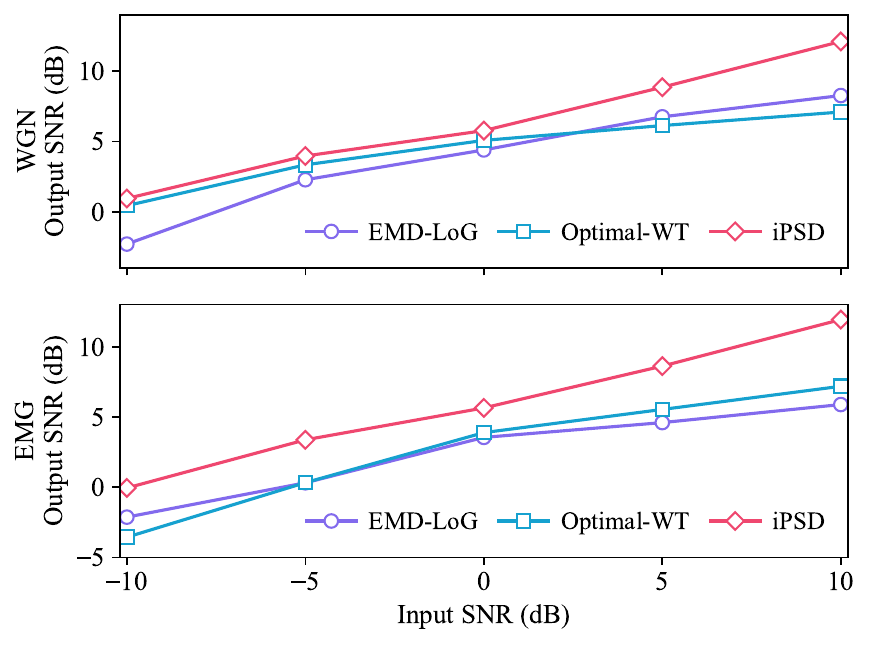}
\vspace{-.6cm}
\caption{Performance over different noise levels. The results compare iPSD with Optimal-WT and EMD-LoG in the presence of WGN (top) and EMG (bottom) noise in terms of output SNR.}
\label{fig:all-snr}
\end{wrapfigure}

The results in \cref{tab:mean} highlight the benefit of the learning-based iPSD over fixed or heuristic partitioning strategies. Performance of the baselines varies across noise types: mode decomposition-based methods (CEEMDAN and EMD-LoG) are less effective for WGN, due to the highly unstructured temporal form, whereas wavelet-based methods (Optimal-WT and FrWT) are less effective for EMG, due to the substantial spectral overlap with EEG activity.
The iPSD method, in contrast, maintains robust performance under these challenging conditions. We also note that iPSD-Zero achieved competitive performance, outperforming all baseline methods and ranking only second to the original iPSD. This slight performance degradation comes with a huge saving in computational cost. Starting from scratch, iPSD-Zero denoises a 10-second EEG signal in 5 seconds, making it suitable for real-time and resource-constrained scenarios where only the incoming signal is accessible.

\Cref{fig:all-snr} compares the output SNR of iPSD against two representative baselines across input SNR levels ranging from $-10$ dB to $10$ dB. The baselines are Optimal-WT from the wavelet-based family and EMD-LoG from the mode decomposition-based family. Across all noise levels, iPSD outperforms both baselines. We also observe that the baselines' output SNR gradually plateaued as input SNR increased, whereas iPSD maintained a steady increase throughout the range. These results demonstrate that iPSD generalizes well to both stationary and nonstationary artifacts and remains robust under critically low SNR.

\subsection{Experiments on real-world data}
The iPSD and Optimal-WT were also evaluated on real-world EEG recordings collected at the Surrey Sleep Research Centre, using our custom-built wearable sensors\footnote{The study was conducted in accordance with the Declaration of Helsinki and Good Clinical Practice, and recieved ethical approval from the University of Surrey Ethics Committe (UEC-2019-065-FHMS). All participants gave written informed consent beforehand.}.
The dataset comprised 684 hours of recordings from 37 participants (aged 65--83 years; 17 females, 20 males), sampled at $256$ Hz with 16-bit resolution and segmented into 10-second windows. \Cref{fig:real_data_combined}a illustrates the in-ear sensor used for data collection. To maintain stable skin contact and minimise motion artifacts, the in-ear sensor uses a viscoelastic foam earbud with a stretchable, low-impedance cloth electrode on its outer surface. We also applied conductive gel to the electrode before inserting the sensor into the ear canal to further improve the skin-electrode contact.

\begin{figure}[t]
  \centering
    \includegraphics[width=1\linewidth]{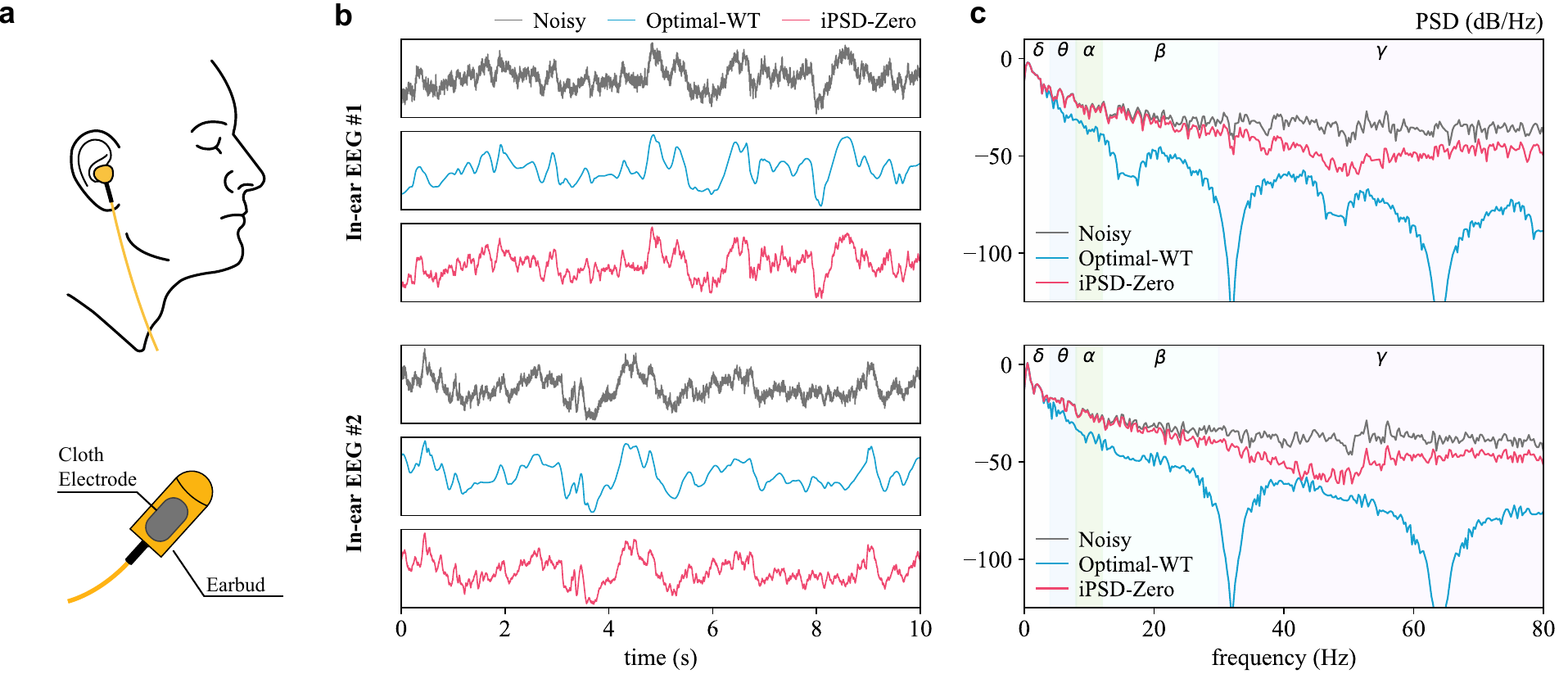}
  \caption{Real-world in-ear EEG denoising with iPSD. (\textbf{a}) Wearable EEG acquisition setup using the in-ear sensor, which integrates a viscoelastic earbud with a stretchable soft-cloth electrode. (\textbf{b}) Noisy in-ear EEG (gray) and the denoised output from Optimal-WT (blue) and iPSD-Zero (pink). (\textbf{c}) PSDs of the noisy in-ear EEG (gray), the Optimal-WT output (blue), and the iPSD-Zero output (pink). The key frequency bands for EEG analysis: $\delta$ ($0.5$--$4$ Hz), $\theta$ ($4$--$8$ Hz), $\alpha$ ($8$--$12$ Hz), $\beta$ ($12$--$30$ Hz), and $\gamma$ ($30$--$80$ Hz) are color-coded in the background.}
  \label{fig:real_data_combined}
\end{figure}

\Cref{fig:real_data_combined}b and \ref{fig:real_data_combined}c show representative noisy in-ear EEG recordings, together with the denoising outputs of Optimal-WT and iPSD-Zero. While both iPSD and iPSD-Zero are applicable here, we only show iPSD-Zero in the figure: it operates under the more constrained zero-shot setting, yet produces outputs that are visually indistinguishable from those of iPSD.
\Cref{fig:real_data_combined}b shows time-domain waveforms. The Optimal-WT outputs, although visually smooth, substantially corrupt the original EEG signal. For example, the spindle-like waveform between 7--8.5 s in the lower panel of \cref{fig:real_data_combined}b (In-ear EEG \#2), characteristic of non-REM sleep, is corrupted by Optimal-WT but preserved by iPSD-Zero. The corresponding power spectral densities (PSDs) in ~\cref{fig:real_data_combined}c further support that Optimal-WT is oversmoothing: it exhibits spurious dips at multiple frequencies, indicating that genuine spectral content is being corrupted. In contrast, iPSD-Zero effectively suppresses the spiky, fluctuating noise artifacts while preserving the underlying EEG. Since the in-ear EEG was recorded during sleep, its spectral power is expected to concentrate below $30$ Hz, with the elevated high-frequency power reflecting noise contamination. As shown, iPSD-Zero suppresses the high-frequency noise while preserving the low-frequency EEG components. This visual comparison is consistent with the quantitative results in \cref{tab:mean}, where iPSD substantially outperforms the baselines in spectral fidelity.

\begin{figure}[t]
  \centering
    \includegraphics[width=1\linewidth]{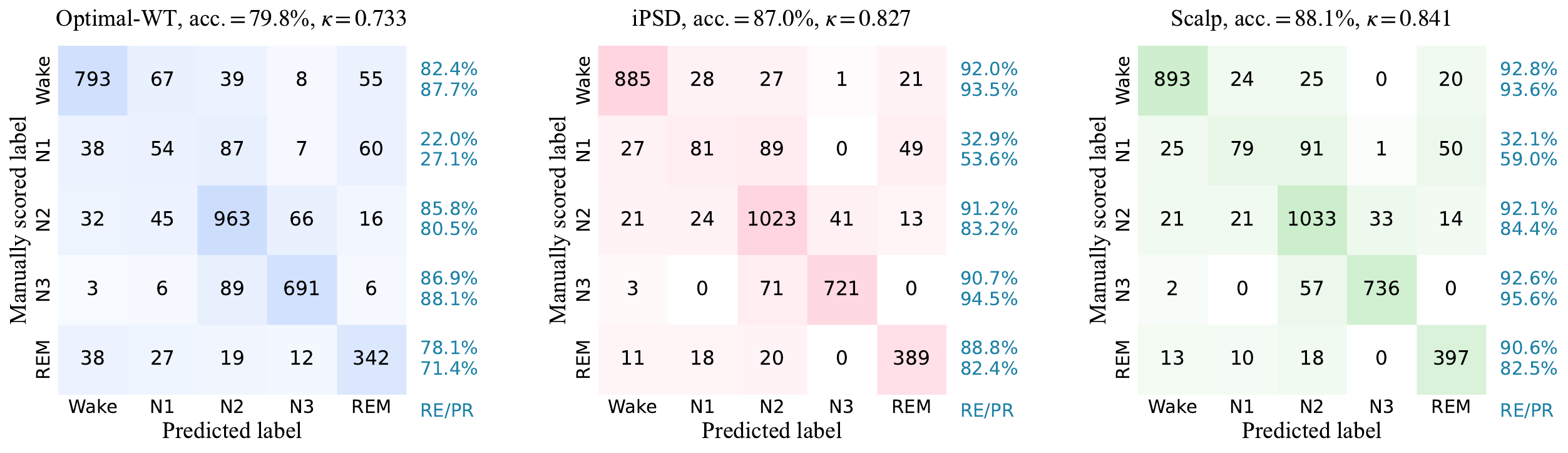}
    \caption{Confusion matrices for sleep-stage classification using the output of the baseline Optimal-WT and the proposed iPSD. Classification results using concurrently recorded high-quality scalp EEG are also included as a reference performance target. The recall and precision (RE/PR) for each sleep stage are shown on the right of each row.}
  \label{fig:confusion_matrix}
\end{figure}

To demonstrate the practical value of iPSD, we evaluate it on sleep stage classification, a cornerstone task in automatic sleep monitoring. Classification is performed on 3,563 in-ear EEG segments, each annotated by experts into one of five sleep stages: Wake, N1, N2, N3, and REM. We train three XGBoost classifiers using a standard set of features~\citep{vallat2021open} extracted from (i) in-ear EEG denoised by the baseline Optimal-WT, (ii) in-ear EEG denoised by iPSD, and (iii) concurrently recorded scalp EEG (single channel: C3-M2). The scalp EEG, recorded with the clinical-grade SomnoHD system, is of high quality and serves as a performance reference that the denoising methods aim to match. \Cref{fig:confusion_matrix} shows resulting confusion matrices. We observe that classification based on iPSD substantially outperforms that based on Optimal-WT ~(accuracy $87.0\%$, Cohen's $\kappa = 0.827$ vs.\ $79.8\%$, $\kappa = 0.733$), with the largest gains in sleep stages characterised by higher-frequency activity (Wake, N1, and REM). This pattern aligns with \cref{fig:real_data_combined}c, where iPSD shows superior spectral fidelity above $4$ Hz. N1 is consistently the most difficult stage to classify, but iPSD's recall and precision substantially exceed those of Optimal-WT and match the scalp EEG, indicating that the difficulty is inherent to N1 classification~\citep{lee2022interrater} with EEG alone rather than a limitation of denoising. Overall, the per-class metrics of iPSD-denoised in-ear EEG fall within a few points of scalp EEG across all stages, demonstrating that iPSD recovers clinically meaningful information from the noisy wearable recordings at a level comparable to clinical-grade recordings.

\subsection{Ablation study on intelligent partition}\label{sec:abla}
Here, the importance of the proposed learnable partitioning $\pi_{\psi}$ is demonstrated through an ablation study. Specifically, we introduce a baseline with the $\pi_\psi$ replaced by interleaved downsampling (ID), a rule-based partitioning that assigns alternating samples to $(\bs^l, \bs^r)$ and has proven to be highly effective for self-supervised image denoising~\cite{zs-N2N}. Experiments are conducted on the same synthetic EEG signals described in~\cref{sec:syn_data}, contaminated by WGN and EMG artifacts at input SNR levels of $-5$ dB and $0$ dB.
\Cref{tab:abla_partition} summarizes the denoising performance achieved by all three methods across experiments.

\begin{table*}[t]
\caption{Ablation of the iPSD partitioning strategy: learned $\pi_\psi$ vs.\ interleaved downsampling (ID).}
\centering
\renewcommand{\arraystretch}{1.1}
\footnotesize
\setlength{\tabcolsep}{2.9pt}
\begin{tabular}{l *{6}{c} | *{6}{c}}
\toprule
\noalign{\vspace{-2pt}}
& \multicolumn{6}{c|}{\textbf{WGN}}
& \multicolumn{6}{c}{\textbf{EMG}} \\[-2pt]
\cmidrule(lr){2-7}\cmidrule(lr){8-13}
& \multicolumn{3}{c}{$-5$ dB Input SNR}
& \multicolumn{3}{c|}{$0$ dB Input SNR}
& \multicolumn{3}{c}{$-5$ dB Input SNR}
& \multicolumn{3}{c}{$0$ dB Input SNR} \\[-2pt]
\cmidrule(lr){2-4}\cmidrule(lr){5-7}\cmidrule(lr){8-10}\cmidrule(lr){11-13}
{\footnotesize Method}
  & SNR & PSNR & S-MSE
  & SNR & PSNR & S-MSE
  & SNR & PSNR & S-MSE
  & SNR & PSNR & S-MSE \\[-2pt]
\midrule
ID
  & $2.58$ & $16.68$ & $62.33$
  & $5.13$ & $19.24$ & $67.18$
  & $1.62$ & $15.73$ & $67.68$
  & $4.31$ & $18.41$ & $60.52$ \\
iPSD
  & \cellcolor{myred!20}$3.97$  & \cellcolor{myred!20}$18.07$ & \cellcolor{myred!20}$55.14$
  & \cellcolor{myred!20}$5.78$  & \cellcolor{myred!20}$19.90$ & \cellcolor{myred!20}$41.13$
  & \cellcolor{myred!20}$3.63$  & \cellcolor{myred!20}$17.73$ & \cellcolor{myred!20}$53.04$
  & \cellcolor{myred!20}$5.95$  & \cellcolor{myred!20}$20.05$ & \cellcolor{myred!20}$39.21$ \\
iPSD-Zero
  & \cellcolor{myblue!20}$3.89$ & \cellcolor{myblue!20}$18.00$ & \cellcolor{myblue!20}$58.34$
  & \cellcolor{myblue!20}$5.76$ & \cellcolor{myblue!20}$19.86$ & \cellcolor{myblue!20}$48.67$
  & \cellcolor{myblue!20}$3.27$ & \cellcolor{myblue!20}$17.37$ & \cellcolor{myblue!20}$56.62$
  & \cellcolor{myblue!20}$5.31$ & \cellcolor{myblue!20}$19.41$ & \cellcolor{myblue!20}$44.12$ \\
\bottomrule
\multicolumn{13}{l}{%
  * \colorbox{myred!20}{\raisebox{0pt}[0.5em][0em]{Pink}} /
    \colorbox{myblue!20}{\raisebox{0pt}[0.5em][0em]{Blue}}
    markers denote the best / second-best performance;\, S-MSE denotes the spectral MSE.}
\end{tabular}
\label{tab:abla_partition}
\end{table*}

We observe that the intelligent partitioning module (both in iPSD and iPSD-Zero) consistently outperforms the ID baseline under all noise conditions. Under low-SNR conditions (at $-5$ dB), iPSD achieves $1.39$ dB higher output SNR under WGN contamination and $2.01$ dB higher output SNR under EMG contamination. Despite working in the constrained zero-shot setting, iPSD-Zero only slightly underperforms iPSD under EMG noise, which can be attributed to its more constrained partitioning. Nonetheless, it substantially outperforms the ID baseline, by at least $0.63$ dB under WGN and $1.00$ dB under EMG. The results confirm that intelligent partitioning successfully constructs sub-signal pairs that are more effective for training $f_\theta$, leading to stronger denoising performance.

\section{Related Work}
Conventional scalp EEG processing widely uses band-pass and notch filters, which have limited use for low-SNR wearable EEG due to its substantial spectral overlap with prominent artifacts (EMG and EOG)~\citep{uriguen2015eeg}. To handle the localized and nonstationary nature of these artifacts, wavelet-based methods decompose the EEG signal to attenuate or discard components that correlate weakly with a predefined set of wavelet bases before reconstructing the denoised signal~\citep{krishnaveni2006automatic}. However, performance of these methods is sensitive to signal-specific choices of wavelet basis, thresholding strategy, and decomposition depth~\citep{grobbelaar2022survey}. Mode decomposition methods address the limitation of fixed wavelet bases by decomposing signals into data-driven mode functions~\citep{huang1998empirical, dragomiretskiy2013variational}, but they rely either on perturbation-sensitive local extrema and signal envelopes, or on optimization problems whose solutions depend strongly on hyperparameter choices. Despite the improvements by hybrid methods such as VMD-wavelet and WPT-ICA~\citep{kaur2021eeg, kerechanin2022eeg}, these approaches remain heavily reliant on handcrafted bases and heuristic decomposition strategies that do not generalize well.

Recent advances in DL have opened a promising new avenue for EEG denoising. Highly expressive neural architectures allow direct mapping of noisy EEG signals to their clean counterparts, eliminating the need for handcrafted decomposition strategies. \citet{zhang2021novel} show that DL-based models can learn hierarchical representations, such as features characterizing rhythmic oscillations and waveform morphology, strengthening the denoising performance. Despite their effectiveness, however, these approaches require a large collection of paired clean-noisy EEG data for training, which are unobtainable in practice. Noise2Noise (N2N)~\citep{lehtinen2018noise2noise, kashyap21_interspeech} offers a compelling self-supervised image denoising approach based on learning to map independent noisy realizations of a signal to one another. Yet N2N requires paired noisy measurements of the same signal, which is virtually impossible to obtain for EEG signals, which are unique, unrepeatable observations.
Subsequent works~\citep{krull2019noise2void, huang2021neighbor2neighbor, zs-N2N, chen2025self} seek learning from a single noisy observation by relying on rule-based partitioning or masking strategies, which are not suited to non-stationary, fast-oscillating EEG signals. In contrast, iPSD learns to generate paired noisy realizations sharing the same underlying clean signal via flexible partitioning of a single measurement.

\section{Conclusion}\label{sec:conclu}
The proposed iPSD (and its zero-shot variant, iPSD-Zero) enables training of highly expressive deep learning EEG denoisers without clean ground-truth signals. Across extensive experiments on synthetic and real EEG data, iPSD and iPSD-Zero consistently outperformed state-of-the-art baselines with markedly higher spectral fidelity. The value of denoising is in empowering tasks that depend on high-quality signals. We demonstrated this in a sleep-stage classification task, where the application of iPSD enhanced classification accuracy up to the standards of scalp EEG. These results suggest that the methods we developed here can effectively narrow the gap between accessible wearable measurements and clinical-grade recordings. Moreover, iPSD-Zero's ability to denoise a single signal without prior training makes it deployable in real-time and on-device, bringing continuous, accessible neural monitoring and diagnosis closer to reality.
Our method is broadly applicable to denoising temporal signals where clean references are unavailable. A limitation is the assumption that noise components in the noisy sub-signals are mutually independent and independent of the signal. Although such independence assumption is shared by most methods that denoise without a clean reference, our experiments on real-world data demonstrated that iPSD remains effective under realistic noise conditions where this assumption may not be strictly satisfied.

\section*{Acknowledgment}
HT acknowledges financial support from IBM through an IBM PhD Fellowship. Authors acknowledge financial support from Imperial College London through an Imperial College Research Fellowship grant awarded to HH.

{
\small
\bibliographystyle{unsrtnat}
\bibliography{ref}
}

\vspace{3em}
\appendix
\crefalias{section}{appendix}
\crefalias{subsection}{appendix}
\crefalias{subsubsection}{appendix}
\renewcommand{\thesubsection}{\Alph{subsection}}
\setcounter{theorem}{0}
\setcounter{proposition}{0}
\allowdisplaybreaks

\newcommand{\appendixTitle}{%
\vbox{
    \centering
	\hrule height 4pt
	\vskip 0.2in
	{\LARGE \bf Appendix}
	\vskip 0.2in
	\hrule height 1pt 
}}

\appendixTitle



\subsection{Proof of~\cref{theorem:eq}}\label{ap:eq_proof}
\begin{theorem}
Suppose \((\bs^l, \bs^r) \sim \pi_\psi(\cdot \mid \bs)\) are independent noisy realizations of the same underlying signal \(\bx\). Then the \(\theta^\star\) that optimizes the self-supervised loss in \cref{eq:opt} also minimizes the expected $L2$ distance between the network outputs and the ground-truth \(\bx\), i.e.,
\begin{equation*}
\theta^\star = \argmin_{\theta}\, \mathbb{E}
\Big[ \big\| f_{\theta}(\bs^l) - \bx \big\|_{2}^{2} + \big\| f_{\theta}(\bs^r) - \bx \big\|_{2}^{2} \Big].
\end{equation*}
\end{theorem}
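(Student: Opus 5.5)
The plan is the standard Noise2Noise bias–variance decomposition, applied to each of the two terms in \cref{eq:opt} and using the independence hypothesis. Write the two sub-signals as $\bs^l = \bx + \bn^l$ and $\bs^r = \bx + \bn^r$. Here $\bn^l$ and $\bn^r$ are zero-mean noise vectors, and the sub-signals share the same underlying clean signal $\bx$, which is exactly the hypothesis of the theorem. Independence of the two noisy realizations is taken to mean two things: $\bn^r$ is independent of $(\bx, \bn^l)$, and symmetrically $\bn^l$ is independent of $(\bx, \bn^r)$. Both are needed for $\mathbb{E}[\bn^r \mid \bs^l] = 0$, which is the only property of the noise the argument uses.

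First I expand the first term of the self-supervised loss by inserting $\bx$:
\begin{equation*}
\big\| f_\theta(\bs^l) - \bs^r \big\|_2^2 = \big\| f_\theta(\bs^l) - \bx \big\|_2^2 - 2\,\big(f_\theta(\bs^l) - \bx\big)^\top \bn^r + \big\| \bn^r \big\|_2^2 .
\end{equation*}
Next I take expectations and handle the cross term by conditioning on $(\bx, \bs^l)$. The vector $f_\theta(\bs^l) - \bx$ is measurable with respect to $(\bx, \bs^l)$, and $\bn^r$ is independent of it with zero mean. Hence
\begin{equation*}
\mathbb{E}\big[(f_\theta(\bs^l) - \bx)^\top \bn^r\big] = \mathbb{E}\big[(f_\theta(\bs^l) - \bx)^\top\,\mathbb{E}[\bn^r \mid \bx, \bs^l]\big] = 0 .
\end{equation*}
The term $\mathbb{E}\|\bn^r\|_2^2$ does not depend on $\theta$. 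The same computation with $l$ and $r$ swapped handles the second term. Adding the two gives
\begin{equation*}
\mathbb{E}\Big[\|f_\theta(\bs^l)-\bs^r\|_2^2 + \|f_\theta(\bs^r)-\bs^l\|_2^2\Big] = \mathbb{E}\Big[\|f_\theta(\bs^l)-\bx\|_2^2 + \|f_\theta(\bs^r)-\bx\|_2^2\Big] + C ,
\end{equation*}
where $C = \mathbb{E}\|\bn^l\|_2^2 + \mathbb{E}\|\bn^r\|_2^2$ is independent of $\theta$.

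The final step is to note that two objectives differing by a $\theta$-independent constant have the same set of minimizers over $\theta$. So for the fixed partitioning distribution $\pi_\psi$ (in particular at $\psi^\star$), the $\theta^\star$ from \cref{eq:opt} minimizes the supervised $L2$ risk stated in the theorem. I will remark that the expectation is over the data, the noise and the partition draw, and that the argument holds for each fixed $\psi$.

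The main obstacle is the cross term, and specifically making precise what "independent noisy realizations" must mean for it to vanish. If the noise were merely uncorrelated with the signal, as in the global assumption $\mathbb{E}[\bn^\top\bx]=0$, that would not suffice, because $f_\theta$ is nonlinear. What is needed is that $\bn^r$ have zero conditional mean given $\bs^l$ and $\bx$. I would state this explicitly as the interpretation of the hypothesis; a zero-mean noise assumption is implicit in N2N. I would also note that if the clean components of $\bs^l$ and $\bs^r$ differed, an extra $\theta$-dependent bias term would appear. That bias is what the partitioning module is meant to suppress, which connects the result to \cref{prop_part}.
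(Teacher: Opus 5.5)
Your proposal is correct and matches the paper's proof step for step: the same substitution $\bs^r=\bx+\bn^r$, the same three-term expansion with a vanishing cross term, the symmetric treatment of the second term, and the same $\theta$-independent constant $\mathbb{E}\|\bn^l\|_2^2+\mathbb{E}\|\bn^r\|_2^2$. The one difference is minor. You remove the cross term by conditioning jointly on $(\bx,\bs^l)$, whereas the paper splits it into $\mathbb{E}[{\bn^r}^\top f_\theta(\bs^l)]$ and $\mathbb{E}[{\bn^r}^\top\bx]$ and uses independence of $\bn^r$ from $\bs^l$ and from $\bx$ separately, which needs slightly less than your joint-independence reading (strictly, only $\mathbb{E}[\bn^r\mid\bs^l]=\mathbf{0}$ together with $\mathbb{E}[{\bn^r}^\top\bx]=0$).
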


\begin{proof}
The independent noisy realization pairs \((\bs^l, \bs^r)\) can be written as $\bs^l = \bx + \bn^l$ and $\bs^r = \bx + \bn^r$, where $\bn^l$ and $\bn^r$ are zero-mean and mutually independent, and each is independent of $\bx$. Expanding the first term of the self-supervised loss in~\cref{eq:opt} gives
\begin{align*}
\mathbb{E}\!\left[\|f_\theta(\bs^l) - \bs^r\|_2^2\right]
&= \mathbb{E}\!\left[\|f_\theta(\bs^l) - \bx - \bn^r\|_2^2\right] \\
&= \mathbb{E}\!\left[\|f_\theta(\bs^l) - \bx\|_2^2\right] - 2\,\mathbb{E}\!\left[{\bn^r}^\top\!\left(f_\theta(\bs^l) - \bx\right)\right] + \mathbb{E}\!\left[\|\bn^r\|_2^2\right] \\
&= \mathbb{E}\!\left[\|f_\theta(\bs^l) - \bx\|_2^2\right] + \mathbb{E}\!\left[\|\bn^r\|_2^2\right],
\end{align*}
where the last equality follows from $\mathbb{E}\!\left[{\bn^r}^\top f_\theta(\bs^l)\right] = 0$ and $\mathbb{E}\!\left[{\bn^r}^\top \bx\right] = 0$, since $\bn^r$ is zero-mean and independent of both $\bx$ and $\bs^l$. By the same argument with the roles of $\bs^l$ and $\bs^r$ swapped, the second term of~\cref{eq:opt} satisfies
\begin{equation*}
\mathbb{E}\!\left[\|f_\theta(\bs^r) - \bs^l\|_2^2\right] = \mathbb{E}\!\left[\|f_\theta(\bs^r) - \bx\|_2^2\right] + \mathbb{E}\!\left[\|\bn^l\|_2^2\right].
\end{equation*}
Summing the two terms and noting that $\mathbb{E}\!\left[\|\bn^l\|_2^2\right] + \mathbb{E}\!\left[\|\bn^r\|_2^2\right]$ does not depend on $\theta$, we obtain
\begin{equation*}
\argmin_\theta \mathbb{E}\!\left[\|f_\theta(\bs^l) - \bs^r\|_2^2 + \|f_\theta(\bs^r) - \bs^l\|_2^2\right]
= \argmin_\theta \mathbb{E}\!\left[\|f_\theta(\bs^l) - \bx\|_2^2 + \|f_\theta(\bs^r) - \bx\|_2^2\right],
\end{equation*}
which proves the theorem.
\end{proof}

\subsection{Proof of~\cref{prop_part}}\label{ap:prop_part}
\begin{proposition}
For any signal window \(I^{(k)}\), the within-window partitioning strategy used in iPSD can always achieve a partition that is at least as good as the interleaved partition used in~\citep{zs-N2N}, in terms of the mismatch between the underlying clean components of the sub-signals \((\bs_{I^{(k),l}}, \bs_{I^{(k),r}})\).
\end{proposition}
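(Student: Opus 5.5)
The plan is a containment argument: the interleaved partition is one of the candidates iPSD can choose within each window, so the best candidate cannot be worse than it. First I will fix a concrete mismatch functional. For a window $I^{(k)}$ and a candidate subset $A\subset I^{(k)}$ with $|A|=W/2$, write $A=\{a_1<\dots<a_{W/2}\}$ and $B=I^{(k)}\setminus A=\{b_1<\dots<b_{W/2}\}$. The sub-signals are concatenated in index order, so the natural mismatch between their clean parts is
\begin{equation*}
D_k(A)=\sum_{j=1}^{W/2}\big(\bx_{a_j}-\bx_{b_j}\big)^2 .
\end{equation*}
The argument should also work for any other functional of the pair $(\bx_A,\bx_B)$, for example an expected version or the converged N2N loss that serves as the reward. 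I will state the proof for a generic mismatch $D_k$ and point out that nothing specific to the squared form is used. The one requirement is that $D_k$ is symmetric under swapping $A$ and $B$, which the squared form is. This symmetry is what makes it legitimate to count unordered partitions, giving the $\binom{W}{W/2}/2$ candidates.

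Next I will show that the interleaved partition lies in the feasible set. Interleaved downsampling sends odd indices to one sub-signal and even indices to the other. Since $W$ is even and every window starts at $(k-1)W+1$, the restriction to $I^{(k)}$ is $A^{\mathrm{ID}}=\{(k-1)W+1,(k-1)W+3,\dots,kW-1\}$. This set has cardinality $W/2$ and its complement consists of the even indices. It is therefore an admissible within-window partition. Concatenating these sets across windows also reproduces the global interleaved split of $\bs$ exactly. The reason is that the window boundaries preserve parity, so the windowing introduces no reordering.

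The conclusion then follows from minimality. Let $\mathcal{A}_k$ denote the finite set of admissible subsets. Because $A^{\mathrm{ID}}\in\mathcal{A}_k$,
\begin{equation*}
\min_{A\in\mathcal{A}_k} D_k(A)\le D_k(A^{\mathrm{ID}}).
\end{equation*}
The minimum exists because $\mathcal{A}_k$ is finite. The policy $\pi_\psi$ has full support over $\mathcal{A}_k$, so it can realize the minimizer, whether as a deterministic limit or as a mode. This gives "can always achieve a partition at least as good" for every $k$. The same bound also holds after summing over windows, since each window is chosen independently.

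The main obstacle is not the inequality but making "mismatch" precise. I need a definition under which the interleaved partition is genuinely comparable to the others, which requires handling the ordering of indices inside $\bs^l$ and $\bs^r$. I will address this by fixing the sorted-order pairing above. I will then note that any fixed pairing rule that agrees with the interleaved pairing on $A^{\mathrm{ID}}$ works equally well. Finally, I will remark that the inequality can be strict for oscillatory $\bx$, for example when adjacent samples straddle a sign change, which explains why learning the partition can help.
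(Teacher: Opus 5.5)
Your proposal is correct and uses the same containment argument as the paper: the interleaved split restricted to each window is one of the $\binom{W}{W/2}/2$ admissible candidates, so the best candidate can be no worse. Your parity check and sorted-order mismatch functional make precise what the paper leaves implicit. The paper also supports the strictness remark with the concrete window $(1,-1,1,-1)$: the split $\{1,2\}$, $\{3,4\}$ gives identical clean halves, whereas interleaving gives $(1,1)$ versus $(-1,-1)$. That example goes beyond what the proposition asserts.
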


\begin{proof}
The fixed interleaved partition used in~\citep{zs-N2N} is one particular case in the iPSD search space, which contains all $\tbinom{W}{W/2}/2$ candidate within-window partitions of $I^{(k)}$. It follows immediately that iPSD can achieve a partition at least as good as the fixed interleaved partition, since the latter is always available as a candidate.

Moreover, the iPSD partition can be strictly better than interleaved downsampling. Consider a clean signal window
\[
\bx_{I^{(k)}} = (1,-1,1,-1).
\]
Under interleaved partitioning, the resulting clean sub-signals are
\[
\bx_{I^{(k),l}} = (1,1), \qquad \bx_{I^{(k),r}} = (-1,-1),
\]
or equivalently in reversed order. These two sub-signals clearly do not share the same underlying clean component. However, iPSD may choose the feasible within-window partition
\[
I^{(k),l}=\{1,2\}, \qquad I^{(k),r}=\{3,4\},
\]
which yields sub-signals with the same underlying signal:
\[
\bx_{I^{(k),l}} = (1,-1), \qquad \bx_{I^{(k),r}} = (1,-1).
\]
This and many other examples, such as those illustrated in \cref{fig:metho}, show that for signals with periodic structure, the flexibility of iPSD's search space yields strictly better partitions than fixed interleaved downsampling.
\end{proof}

\subsection{Justification for the RL reward signal}\label{ap:reward_just}
In this section,we show that under reasonable assumptions, using the converged denoiser loss as RL reward drives $\pi_\psi$ toward $\bs^l$ and $\bs^r$ that are maximally informative of each other's underlying clean signal. For conciseness, we present the $l \to r$ direction; the reverse direction follows symmetrically.

We follow the standard N2N assumption that the noise $\bn$ is zero-mean and independent of the clean signal $\bx$. In the iPSD setting, for the partitioned pair $(\bs^l,\bs^r)$ induced by $\pi_\psi$, we assume that the right-side noise $\bn^r$ remains conditionally zero-mean given the left-side observation $\bs^l$ and the right-side clean signal $\bx^r$, i.e., $\mathbb{E}[\bn^r \mid \bs^l, \bx^r] = \mathbf{0}$. We also assume that the noise power $\mathbb{E}[\|\bn^r\|_2^2]$ does not depend on the partition policy $\pi_\psi$, which is reasonable since the two partitions have fixed cardinality and the noise statistics are approximately homogeneous across samples. For $(\bs^l, \bs^r)\sim \pi_\psi(\cdot\mid \bs)$, the N2N denoising objective can be rewritten as:
\begin{align}
\begin{split}
    \argmin_\theta \mathbb{E}\left[\|f_\theta(\bs^l) - \bs^r\|^2_2\right] 
    &= \argmin_\theta \mathbb{E}\left[\left\|\left(f_\theta(\bs^l) - \mathbb{E}\left[\bs^r\mid \bs^l\right]\right) - \left(\bs^r- \mathbb{E}\left[\bs^r\mid \bs^l\right]\right)\right\|^2_2\right]\\
    &= \argmin_\theta \mathbb{E}\left[\left\|f_\theta(\bs^l) - \mathbb{E}\left[\bs^r\mid \bs^l\right]\right\|^2_2\right] \\
    &\quad - 2\,\mathbb{E}\left[\left(f_\theta(\bs^l) - \mathbb{E}\left[\bs^r\mid \bs^l\right]\right)^\top\left(\bs^r - \mathbb{E}\left[\bs^r\mid \bs^l\right]\right)\right] \\
    &\quad + \mathbb{E}\left[\left\|\bs^r - \mathbb{E}\left[\bs^r\mid \bs^l\right]\right\|^2_2\right] \\
    &\stackrel{(i)}{=} \argmin_\theta \mathbb{E}\left[\left\|f_\theta(\bs^l) - \mathbb{E}\left[\bs^r\mid \bs^l\right]\right\|^2_2\right],
\label{eq_lossl2}
\end{split}
\end{align}
where $(i)$ follows because the third term does not depend on $\theta$ and can be dropped from the $\argmin$, and the cross term vanishes by applying the tower property $\mathbb{E}[X] = \mathbb{E}[\mathbb{E}[X \mid \bs^l]]$:
\begin{align*}
    &\mathbb{E}\left[\left(f_\theta(\bs^l) - \mathbb{E}[\bs^r\mid \bs^l]\right)^\top\left(\bs^r - \mathbb{E}[\bs^r\mid \bs^l]\right)\right] \\
    &\qquad= \mathbb{E}\left[\mathbb{E}\left[\left(f_\theta(\bs^l) - \mathbb{E}[\bs^r\mid \bs^l]\right)^\top\left(\bs^r - \mathbb{E}[\bs^r\mid \bs^l]\right) \;\Big|\; \bs^l\right]\right] \\
    &\qquad\stackrel{(ii)}{=} \mathbb{E}\left[\left(f_\theta(\bs^l) - \mathbb{E}[\bs^r\mid \bs^l]\right)^\top \mathbb{E}\left[\bs^r - \mathbb{E}[\bs^r\mid \bs^l] \;\Big|\; \bs^l\right]\right] \\
    &\qquad= \mathbb{E}\left[\left(f_\theta(\bs^l) - \mathbb{E}[\bs^r\mid \bs^l]\right)^\top \cdot \mathbf{0}\right] = 0,
\end{align*}
where $(ii)$ holds because $f_\theta(\bs^l) - \mathbb{E}[\bs^r\mid \bs^l]$ is a deterministic function of $\bs^l$ that can be pulled out of the conditional expectation given $\bs^l$. The remaining inner factor then vanishes:
\begin{equation*}
    \mathbb{E}\left[\bs^r - \mathbb{E}[\bs^r\mid \bs^l] \;\Big|\; \bs^l\right]
    = \mathbb{E}[\bs^r\mid \bs^l] - \mathbb{E}[\bs^r\mid \bs^l] = \mathbf{0},
\end{equation*}
where we used linearity of conditional expectation together with the fact that $\mathbb{E}[\bs^r\mid \bs^l]$ is itself a deterministic function of $\bs^l$ and therefore equals its own conditional expectation given $\bs^l$. Let $f_{\theta^{\star}}$ denote the optimal denoiser. 

Assuming sufficient expressiveness of the denoising module $f_\theta$, \cref{eq_lossl2} shows that
\begin{equation}
    f_{\theta^{\star}}(\bs^l) = \mathbb{E}\left[\bs^r \mid \bs^l\right].
\label{eq_opt_f}
\end{equation}

Let $\mathcal{L}^\star(\pi_\psi)$ denote the converged N2N loss in the $l \to r$ direction for $(\bs^l,\bs^r)\sim\pi_\psi(\cdot\mid\bs)$. Following~\cref{eq_opt_f},
\begin{align}
\begin{split}
    \mathcal{L}^\star(\pi_\psi) 
    &= \mathbb{E}\left[\|f_{\theta^{\star}}(\bs^l) - \bs^r\|^2_2\right] \\
    &= \mathbb{E}\left[\left\|\bs^r - \mathbb{E}\left[\bs^r \mid \bs^l\right]\right\|^2_2\right] \\
    &= \mathbb{E}\left[\left\|\bx^r + \bn^r - \mathbb{E}\left[\bx^r \mid \bs^l\right] - \mathbb{E}\left[\bn^r \mid \bs^l\right]\right\|^2_2\right] \\
    &\stackrel{(iii)}{=} \mathbb{E}\left[\left\|\left(\bx^r - \mathbb{E}\left[\bx^r \mid \bs^l\right]\right) + \bn^r\right\|^2_2\right] \\
    &\stackrel{(iv)}{=} \mathbb{E}\left[\left\|\bx^r - \mathbb{E}\left[\bx^r \mid \bs^l\right]\right\|^2_2\right] + \mathbb{E}\left[\|\bn^r\|^2_2\right],
\label{eq_converged_loss}
\end{split}
\end{align}
where $(iii)$ uses $\mathbb{E}[\bn^r \mid \bs^l] = \mathbf{0}$, which follows from the tower property: $\mathbb{E}[\bn^r \mid \bs^l] = \mathbb{E}\big[\mathbb{E}[\bn^r \mid \bs^l, \bx^r] \mid \bs^l\big] = \mathbf{0}$ by our assumption. Step $(iv)$ follows because the cross term vanishes. To see this, apply the tower property with conditioning on $(\bs^l, \bx^r)$:
\begin{align*}
    \mathbb{E}\left[\left(\bx^r - \mathbb{E}\left[\bx^r \mid \bs^l\right]\right)^\top \bn^r\right]
    &= \mathbb{E}\left[\mathbb{E}\left[\left(\bx^r - \mathbb{E}\left[\bx^r \mid \bs^l\right]\right)^\top \bn^r \;\Big|\; \bs^l, \bx^r\right]\right] \\
    &= \mathbb{E}\left[\left(\bx^r - \mathbb{E}\left[\bx^r \mid \bs^l\right]\right)^\top \mathbb{E}\left[\bn^r \mid \bs^l, \bx^r\right]\right] \\
    &= \mathbb{E}\left[\left(\bx^r - \mathbb{E}\left[\bx^r \mid \bs^l\right]\right)^\top \cdot \mathbf{0}\right] = 0,
\end{align*}
where the second equality follows because $\bx^r - \mathbb{E}\left[\bx^r \mid \bs^l\right]$ is determined by $\bs^l$ and $\bx^r$, and therefore behaves as a constant under the conditional expectation given $(\bs^l, \bx^r)$; the third uses the assumption that $\mathbb{E}[\bn^r \mid \bs^l, \bx^r] = \mathbf{0}$.

Upon combining the two results, we obtain the following regarding the converged loss of the denoiser:
\begin{equation}
    \mathcal{L}^\star(\pi_\psi) = \underbrace{\mathbb{E}\left[\left\|\bx^r - \mathbb{E}\left[\bx^r \mid \bs^l\right]\right\|^2_2\right]}_{\text{depends on } \pi_\psi} + \underbrace{\mathbb{E}\left[\|\bn^r\|^2_2\right]}_{\text{constant in } \pi_\psi}.
\label{eq_decomp}
\end{equation}
Note that the first term depends on $\pi_\psi$ through the joint distribution of $(\bs^l, \bx^r)$ induced by the partition policy. By our assumption that $\mathbb{E}[\|\bn^r\|^2_2]$ does not depend on $\pi_\psi$, the second term is constant with respect to $\psi$. Therefore, maximizing the RL reward $-\mathcal{L}^\star(\pi_\psi)$ is equivalent to minimizing the first term of \cref{eq_decomp}:
\begin{equation}
    \argmax_\psi \left(-\mathcal{L}^\star(\pi_\psi)\right) = \argmin_\psi \mathbb{E}\left[\left\|\bx^r - \mathbb{E}\left[\bx^r \mid \bs^l\right]\right\|^2_2\right],
\label{eq_argmax_reward}
\end{equation}
which is the minimum mean squared error (MMSE) of predicting the right-side clean signal $\bx^r$ from the left-side observation $\bs^l$. Hence, using the negative converged N2N loss as the RL reward drives $\pi_\psi$ toward partitions under which $\bs^l$ is maximally predictive of $\bx^r$ in the MMSE sense; by symmetry, $\bs^r$ is maximally predictive of $\bx^l$.

\subsection{Derivation of the policy gradient}\label{ap:grad_as}
We now derive the gradient of the expected denoising reward $\mathbb{E}_{(\bs^l, \bs^r)\sim\pi(\cdot\mid \bs)}\left[R(\bs^l, \bs^r)\right]$ with respect to the parameters $\psi$. Applying the log-derivative identity $\nabla_x f(x) = f(x)\,\nabla_x \log f(x)$ yields
\begin{align}
    \nabla_\psi \mathbb{E}_{(\bs^l, \bs^r)\sim\pi(\cdot\mid \bs)}\left[R(\bs^l, \bs^r)\right] &= \nabla_\psi \sum_{(\bs^l, \bs^r)}\pi_\psi(\bs^l, \bs^r \mid \bs) R(\bs^l, \bs^r) \notag\\
    &= \sum_{(\bs^l, \bs^r)}\pi_\psi(\bs^l, \bs^r \mid \bs) \left[\nabla_\psi \log\pi_\psi(\bs^l, \bs^r \mid \bs) R(\bs^l, \bs^r)\right] \notag\\
    &= \mathbb{E}_{(\bs^l, \bs^r)}\left[\nabla_\psi \log\pi_\psi(\bs^l, \bs^r \mid \bs) R(\bs^l, \bs^r)\right].
\label{eq:pgtheo}
\end{align}
\Cref{eq:pgtheo} is the standard REINFORCE estimator~\citep{williams1992}: the intractable sum over all partitions is replaced by a Monte Carlo expectation, making the gradient estimable from sampled partitions $(\bs^l, \bs^r) \sim \pi_\psi(\cdot \mid \bs)$.

\subsection{Detailed implementation of iPSD-Zero}
\label{ap:ipsd-zs}

\begin{algorithm}[t]
\caption{iPSD-Zero}
\label{alg:lilUCB}
\begin{algorithmic}[1]
\Require Noisy signal $\mathbf{s}$; hyper-parameters $(\alpha,\beta,\sigma,\epsilon,\delta)$; learning rate $\eta^\theta$
\State \textbf{Initialization:} pull each arm once to obtain $\hat{\mu}_a(0)$ for all $a$; set $T_a(0)\gets1$ for all $a$; let $t\gets 0$
\While{$T_a(t) < 1 + \alpha\sum_{i\neq a} T_i(t)$ for all arms $a$}
    \State Compute $l_a(t)$ for all $a$ with \cref{eq:lilucb}
    \State Select arm $a_t = \arg\max_{a} l_a(t)$ and partition $\mathbf{s}$ into $(\mathbf{s}_{a_t}^l,\mathbf{s}_{a_t}^r)$
    \State Initialize $f_\theta$; set $N = 0$
    \While{$\theta$ not converged}
        \State $J_n^d = \bigl\|f_\theta(\mathbf{s}_{a_t}^l)-\mathbf{s}_{a_t}^r\bigr\|_2^2 + \bigl\|f_\theta(\mathbf{s}_{a_t}^r)-\mathbf{s}_{a_t}^l\bigr\|_2^2$
        \State $\theta \gets \theta - \eta^\theta \nabla_\theta J_n^d$;\quad $N = N + 1$
    \EndWhile
    \State Reward $r_t =R(\bs_{a_t}^l, \bs_{a_t}^r) = -\tfrac{1}{10}\sum_{n=N-9}^{\,N} J_n^d$
    \State $\hat{\mu}_a(t+1) \gets \begin{cases} \dfrac{T_{a}(t)\,\hat{\mu}_{a}(t) + r_t}{T_{a}(t)+1}, & a = a_t \\[6pt] \hat{\mu}_a(t), & a \ne a_t \end{cases}$
    \State $T_a(t+1) \gets \begin{cases} T_a(t) + 1, & a = a_t \\ T_a(t), & a \ne a_t \end{cases}$
    \State $t \gets t+1$
\EndWhile
\State $a^\star \gets \arg\max_{a} T_a(t)$
\State \Return denoised signal $\hat{\mathbf{x}} = f_{\theta^\star}(\mathbf{s})$, where $\theta^\star$ is the network trained on partition $(\mathbf{s}_{a^\star}^l, \mathbf{s}_{a^\star}^r)$
\end{algorithmic}
\end{algorithm}

In this section, we provide details of iPSD-Zero. Let arm $a$ denote the partition that corresponds to $(\bs_a^l, \bs_a^r)$. Pulling $a$ leads to a certain reward $R(\bs_a^l, \bs_a^r)$, defined as in~\cref{eq:return} as the negated denoiser $f_\theta$ loss at convergence. Due to the stochastic nature of training neural networks, $R(\bs_a^l, \bs_a^r)$ is also stochastic for each arm. To balance the exploration of less frequently sampled partitions with the exploitation of those that have already demonstrated strong denoising performance, the lil'~UCB index for arm $a$ at round $t$ is defined as
\begin{equation}
  l_a(t) = \hat{\mu}_{a}(t) + (1+\beta)(1+\sqrt{\epsilon})\sqrt{\frac{2\sigma^{2}(1+\epsilon)\log\left(\frac{\log\left((1+\epsilon)T_a(t)\right)}{\delta}\right)}{T_a(t)}},
\label{eq:lilucb}
\end{equation}
where $\hat{\mu}_{a}(t)$ denotes the empirical mean reward of arm $a$ up to round $t$, $T_a(t)$ is the number of times arm $a$ has been pulled up to round $t$, and $\sigma^2$ is the sub-Gaussian variance proxy of the reward distribution. The confidence parameter $\delta \in (0,1)$ and the exploration parameters $\beta, \epsilon > 0$ are user-specified hyper-parameters that jointly control the tightness of the confidence interval. The first term in~\cref{eq:lilucb} promotes the exploitation of arms with higher observed rewards, while the second term promotes the exploration of other arms with decreasing intensity as $T_a(t)$ increases. \Cref{alg:lilUCB} summarizes our adaptation of lil'~UCB to the denoising process in the zero-shot setting.

In iPSD-Zero, hyper-parameters are set to $\alpha = [(2+\beta)/\beta]^2$, $\beta=1$, $\sigma=0.008$, $\epsilon=0.01$, and $\delta=0.55\times10^{-4}$. The iPSD-Zero method is initialized by pulling each arm once. We then iteratively apply lil'~UCB to select the next arm. This results in increasingly confident estimates of partition quality for each arm, reflected by the average reward. At convergence, the algorithm identifies the optimal partition, $(\mathbf{s}_{a^\star}^l, \mathbf{s}_{a^\star}^r)$, which is then used to train the final denoiser $f_{\theta^\star}$. The design of iPSD-Zero transforms the zero-shot denoising problem into a structured exploration-exploitation task, where, by leveraging lil'~UCB, we achieve principled and sample-efficient identification of the partition that yields the most reliable self-supervision.

\begin{table*}[t]
\caption{Ablation study on localized window lengths.}
\centering
\renewcommand{\arraystretch}{1.1}
\footnotesize
\setlength{\tabcolsep}{2.9pt}
\begin{tabular}{c *{6}{c} | *{6}{c}}
\toprule
\noalign{\vspace{-1pt}}
& \multicolumn{6}{c|}{\textbf{WGN}}
& \multicolumn{6}{c}{\textbf{EMG}} \\[-2pt]
\cmidrule(lr){2-7}\cmidrule(lr){8-13}
& \multicolumn{3}{c}{$-5$ dB Input SNR}
& \multicolumn{3}{c|}{$0$ dB Input SNR}
& \multicolumn{3}{c}{$-5$ dB Input SNR}
& \multicolumn{3}{c}{$0$ dB Input SNR} \\[-2pt]
\cmidrule(lr){2-4}\cmidrule(lr){5-7}\cmidrule(lr){8-10}\cmidrule(lr){11-13}
{\footnotesize Window length}
  & SNR & PSNR & S-MSE
  & SNR & PSNR & S-MSE
  & SNR & PSNR & S-MSE
  & SNR & PSNR & S-MSE \\[-2pt]
\midrule
4 samples
  & $2.61$ & $16.71$ & $60.03$
  & $5.38$ & $19.48$ & $55.27$
  & $1.78$ & $15.88$ & $60.07$
  & $4.77$ & $18.87$ & $45.13$ \\
8 samples
  & $3.97$  & $18.07$ & $55.14$
  & $5.78$  & $19.90$ & $41.13$
  & $3.63$  & $17.73$ & $53.04$
  & $5.95$  & $20.05$ & $39.21$ \\
10 samples
  & $3.96$ & $18.06$ & $55.47$
  & $5.52$ & $19.62$ & $46.92$
  & $3.37$ & $17.47$ & $56.62$
  & $5.66$ & $19.76$ & $42.12$ \\
\bottomrule
\multicolumn{13}{l}{%
  S-MSE denotes the spectral MSE.}
\end{tabular}
\label{tab:abla_win_len}
\end{table*}

\subsection{Notes on iPSD implementation}
\label{ap:net_architectures}
The iPSD is implemented in Python 3.12.5 and tested on an Ubuntu 22.04 machine with a 13th Gen Intel Core i7-13850HX CPU, an Nvidia RTX 3500 Ada GPU (12~GB VRAM), and 32~GB of RAM. First, We perform an ablation on the lengths of the localized window (\cref{tab:abla_win_len}). Since the window length must divide the signal length $L=2560$ and be even (so that $\bs^l$ and $\bs^r$ have equal length), we ablate over window lengths of 4, 8, and 10 samples. Longer windows (e.g., the next feasible length, 16 samples) produce too many candidate partitions, inflating the parameter count of $\pi_\psi$ to an intractable size. Results show that a window length of 8 samples is optimal, providing sufficient flexibility while keeping the search space small enough for efficient policy training.

\begin{table}[t]
\centering
\footnotesize
\begin{threeparttable}
\caption{Denoising module Architectures}
\label{tab:app_all}
\begin{tabular}{l l >{\centering\arraybackslash}m{0.45\linewidth} >{\centering\arraybackslash}m{0.1\linewidth}}
\hline
\textbf{Model} & \textbf{Layer} & \textbf{Channel Progression} & \textbf{Hidden dim} \\
\hline
\multirow{3}{*}{iPSD-MLP}
  & FC-1 & $8 \to 64$ & - \\
  & FC-2 & $64 \to 128$ & - \\
  & FC-3 & $128 \to 35$ & - \\
\hline
\multirow{7}{*}{iPSD-U-Net}
  & Conv-Enc-1 & $8 \to 64 \to 64$ & - \\
  & Conv-Enc-2 & $64 \to 128 \to 128$ & - \\
  & Conv-Enc-3 & $128 \to 256 \to 256$ & - \\
  & Conv-Dec-3 & $512 \to 128 \to 128$ & - \\
  & Conv-Dec-2 & $256 \to 64 \to 64$ & - \\
  & Conv-Dec-1 & $128 \to 64 \to 64$ & - \\
  & Conv-Out   & $64 \to 35$ & - \\
\hline
\multirow{5}{*}{iPSD-RNN}
  & RNN-1 & $8 \to 128$ & 64 \\
  & RNN-2 & $128 \to 128$ & 64 \\
  & FC-1  & $128 \to 256$ & - \\
  & FC-2  & $256 \to 256$ & - \\
  & FC-3  & $256 \to 35$ & - \\
\hline
\multirow{5}{*}{iPSD-GRU}
  & GRU-1 & $8 \to 128$ & 64 \\
  & GRU-2 & $128 \to 128$ & 64 \\
  & FC-1  & $128 \to 256$ & - \\
  & FC-2  & $256 \to 256$ & - \\
  & FC-3  & $256 \to 35$ & - \\
\hline
\end{tabular}
\begin{tablenotes}
\footnotesize
\item FC = Fully Connected; Conv-Enc/Dec = 1D Convolutional encoder/decoder block (2 conv layers each).
\item FC and Conv layers use the ReLU activation; Batch normalization is applied in all Conv layers.
\item iPSD-RNN and iPSD-GRU are bidirectional.
\end{tablenotes}
\end{threeparttable}
\end{table}

Second, before settling on the architecture of the partitioning model $\pi_\psi$, we compared four candidates: (i) iPSD-MLP, a window-independent baseline without temporal context; (ii) iPSD-U-Net, a multi-scale fully convolutional architecture; (iii) iPSD-RNN, a recurrent neural network capturing sequential context across windows; and (iv) iPSD-GRU, a gated RNN variant with selective retention of long-range context. \Cref{tab:app_all} summarizes the detailed architectures of the four networks. All four variants are trained using the same PPO hyperparameter setting.

\cref{ap_tab_denoisor} compares the denoising performance of all the iPSD variants.
Under WGN contamination, the iPSD-GRU achieved the best overall performance, benefiting from its gating mechanisms, which effectively capture sequential context while mitigating vanishing/exploding gradients. Notably, the other sequential model, iPSD-RNN, performed worse than iPSD-MLP, which does not use any sequential context. This indicates that while sequential context is indeed important for partitioning $\bs$, long sequences must be handled appropriately, which GRU achieves successfully. The iPSD-U-Net model performed less favourably, which may be attributed to its relatively limited receptive field when applied to long EEG signals. Based on these results, we adopt GRU as the architecture for the partitioning model.

\begin{table}[t]
\newcommand{\std}[1]{{\scriptstyle \pm #1}}
\caption{Denoising Performance of different denoising module architectures.}
\centering
\footnotesize
\begin{tabular}{l c c c c c c}
\hline
& \multicolumn{3}{c}{-5 dB Input SNR}
  & \multicolumn{3}{c}{0 dB Input SNR} \\ \cline{2-7}
Method & SNR & PSNR & Spec. MSE
  & SNR & PSNR & Spec. MSE \\ \hline
\multicolumn{7}{c}{WGN artifacts} \\ \hline
 iPSD-MLP
   & $3.88$ & $17.97$ & $63.66$
   & $5.73$ & $19.83$ & $54.11$ \\
 iPSD-U-Net
   & $3.76$ & $17.87$ & $69.07$
   & $5.69$ & $19.79$ & $52.46$ \\
 iPSD-RNN
   & $3.83$ & $17.94$ & $65.89$
   & $5.71$ & $19.81$ & $51.24$ \\
 iPSD-GRU
   & $3.97$ & $18.07$ &$55.14$
   & $5.78$ & $19.90$ &$41.13$ \\
\hline
\multicolumn{7}{c}{EMG artifacts} \\ \hline
 iPSD-MLP
   & $3.19$ & $17.29$ & $58.46$
   & $5.23$ & $19.33$ & $49.48$ \\
 iPSD-U-Net
   & $3.03$ & $17.13$ & $59.34$
   & $5.18$ & $19.29$ & $50.01$ \\
 iPSD-RNN
   & $2.94$ & $17.04$ & $61.30$
   & $5.05$ & $19.15$ & $51.55$ \\
 iPSD-GRU
   & $3.63$ & $17.73$ & $53.04$
   & $5.95$ & $20.05$ & $39.21$\\
\hline
\end{tabular}
\label{ap_tab_denoisor}
\end{table}

\begin{table}[t]
\centering
\footnotesize
\caption{Baselines Used in Quantitative Comparisons}
\label{tab:baselines}
\begin{tabular}{l m{0.56\linewidth}}
\hline
\textbf{Method} & \textbf{Description} \\
\hline
mVMD~\citep{dora2020correlation} & modified Variational Mode Decomposition for correlated noise \\
CEEMDAN~\citep{colominas2014improved} & ensemble Empirical Mode Decomposition (EMD) with noise injection \\
FrWT~\citep{houamed2020ecg} & fractional Wavelet Transform \\
EMD-LoG~\citep{ranjan2022motion} & EMD and Laplacian-of-Gaussian filtering \\
WPT-ICA~\citep{kerechanin2022eeg} & Wavelet Packet Transform and Independent Component Analysis \\
Optimal-WT~\citep{alyasseri2019eeg} & optimized thresholding of the Discrete Wavelet Transform \\
\citeauthor{chen2025self}~\citep{chen2025self} & self-supervised consistency learning with masked reconstruction \\
\hline
\end{tabular}
\end{table}

\subsection{Baselines}\label{ap:baselines}

\Cref{tab:baselines} provides the details of the baselines used in our quantitative evaluation. These baselines span wavelet-based methods (FrWT~\citep{houamed2020ecg}, Optimal-WT~\citep{alyasseri2019eeg}), mode decomposition methods (mVMD~\citep{dora2020correlation}, CEEMDAN~\citep{colominas2014improved}, EMD-LoG~\citep{ranjan2022motion}), hybrid methods (WPT-ICA~\citep{kerechanin2022eeg}), and self-supervised DL methods (\citeauthor{chen2025self}~\citep{chen2025self}), covering the dominant paradigms in EEG artifact removal. Hyperparameters follow each method's original specification unless stated otherwise. For CEEMDAN, we use the PyEMD implementation with default parameters and discard the first 2 intrinsic mode functions. For \citeauthor{chen2025self}, we retrain it on the same training set that iPSD uses with the authors' default hyperparameters. All other baselines are implemented using the authors' released code where available, otherwise re-implemented from the published descriptions.

\subsection{Evaluation metrics}
\label{ap:metrics}

\paragraph{\small Signal-to-Noise Ratio (SNR)}
SNR is defined as the ratio of the clean signal power to the residual noise power after denoising. A higher SNR indicates better noise suppression while retaining the power of the original signal. The SNR (in dB) is computed as
\begin{equation}
\mathrm{SNR} = 10 \log_{10} \left( \frac{\|\mathbf{x}\|_2^2}{\|\mathbf{x}-\hat{\mathbf{x}}\|_2^2} \right),
\end{equation}
where $\mathbf{x}$ denotes the clean reference signal and $\hat{\mathbf{x}}$ the denoising output. This metric directly measures reconstruction accuracy in the time domain.

\paragraph{\small Peak Signal-to-Noise Ratio (PSNR)}
PSNR is defined as the ratio of the squared peak amplitude of the clean signal to the average power of the residual noise after denoising. A higher PSNR indicates better noise suppression while retaining extreme values of the original signal, which are often clinically relevant information for EEG signals (e.g., spikes or sharp waves). The PSNR (in dB) is defined as
\begin{align}
\begin{split}
    \mathrm{PSNR} = 10 \log_{10} \left( \frac{x_{\max}^2}{\|\mathbf{x}-\hat{\mathbf{x}}\|_2^2/L} \right),
\end{split}
\end{align}
where $L$ is the signal length and $x_{\max}$ is the maximum amplitude of the clean signal. Compared to SNR, which focuses on average power, PSNR is more sensitive to extreme values in the signal.

\paragraph{\small Mean-Square Error of the Spectrum (Spectral MSE)}
Since the clinical interpretation of EEG signals often relies on frequency-domain information (e.g., power in $\delta$, $\theta$, $\alpha$, $\beta$, and $\gamma$ bands)~\citep{vallat2021open}, we also measure reconstruction accuracy in the frequency domain. Spectral MSE is defined as the mean squared error between the PSDs of the clean and denoised signals. The PSD of a signal $\mathbf{x}$ (in dB) is defined as
\begin{equation}
\mathrm{spec}_{\mathbf{x}}(f) = 10 \log_{10}\left( \sum_{m=-\infty}^{\infty}R_{\mathbf{x}\mathbf{x}}[m] e^{-j2\pi fm}\right) ,
\end{equation}
where $R_{\mathbf{x}\mathbf{x}}[m] = \mathbb{E}\left [ x_{i} \cdot x_{i+m}\right ]$ denotes the autocorrelation of $\mathbf{x}$, and $f$ denotes the frequency. In practice, we estimate the PSD of finite-length signals using Welch's method, which returns discrete frequency bins $\{f_k\}_{k=1}^{N_f}$. The Spectral MSE is then computed as
\begin{equation}
\mathrm{Spectral\ MSE} = \frac{1}{N_f} \sum_{k=1}^{N_f}
\bigl( \mathrm{spec}_{\mathbf{x}}(f_k) -
\mathrm{spec}_{\hat{\mathbf{x}}}(f_k) \bigr)^{2}.
\end{equation}
This metric assesses how well the denoising algorithm removes noise while preserving the spectral distribution of the EEG signal, which is critical for downstream EEG analysis.

\subsection{Denoising performance statistics}
\label{ap:denoise_std}

In \cref{tab:snr_comp_full}, we report the standard deviation across all EEG segments for the comparison in \cref{tab:mean} of \cref{sec:syn_data}. We observe that both iPSD and iPSD-Zero significantly outperform baselines on average, while remaining highly consistent across segments, demonstrating the robustness of the proposed framework. 
Similarly, \cref{tab:abla_partition_full} provides the standard deviation across all EEG segments for the ablation on partitioning strategy in \cref{tab:abla_partition} of \cref{sec:abla}, which showcases that the intelligent partitioning is consistently effective across EEG segments.

\begin{table}[t]
\newcommand{\std}[1]{{\scriptstyle \pm #1}}
\newcommand{\smse}[2]{(#1{\scriptstyle\,\pm\,#2}){\scriptscriptstyle \times 10^{4}}}
\newcommand{\smsea}[2]{(#1{\scriptstyle\,\pm\,#2}){\scriptscriptstyle \times 10^{3}}}
\caption{Denoising Performance. Values are reported as mean $\pm$ standard deviation.}
\centering
\footnotesize
\setlength{\tabcolsep}{3.5pt}
\renewcommand{\arraystretch}{1.1}
\begin{tabular}{cl ccc ccc}
\hline
& Method
  & \multicolumn{3}{c}{-5 dB Input SNR}
  & \multicolumn{3}{c}{0 dB Input SNR} \\ \cline{3-8}
& & SNR & PSNR & Spec. MSE
  & SNR & PSNR & Spec. MSE \\ \hline
\multirow{8}{*}{\rotatebox[origin=c]{90}{\shortstack{WGN}}}
& mVMD
   & $-5.05\std{1.93}$ & $9.05\std{3.67}$ & $\smse{1.46}{0.31}$
   & $2.18\std{2.85}$  & $16.28\std{3.62}$ & $\smse{1.13}{0.36}$ \\
& CEEMDAN
   & $-3.85\std{1.61}$ & $10.25\std{3.31}$ & $\smse{0.98}{0.24}$
   & $4.13\std{1.84}$  & $18.23\std{2.67}$ & $\smse{0.78}{0.25}$ \\
& FrWT
   & $-3.54\std{1.41}$ & $10.56\std{2.91}$ & $\smse{0.96}{0.21}$
   & $3.43\std{1.92}$  & $17.53\std{2.73}$ & $\smse{0.83}{0.27}$ \\
& EMD-LoG
   & $2.28\std{1.79}$ & $16.38\std{3.58}$  & $\smse{0.80}{0.17}$
   & $4.40\std{1.87}$  & $18.50\std{2.65}$ & $\smse{0.80}{0.25}$ \\
& WPT-ICA
   & $-4.89\std{1.87}$ & $9.21\std{2.90}$  & $\smse{1.63}{0.28}$
   & $1.74\std{1.55}$  & $15.84\std{2.46}$ & $\smse{1.23}{0.25}$ \\
& \citeauthor{chen2025self}
   & $2.01\std{1.28}$ & $16.11\std{2.52}$ & $\smse{0.28}{0.07}$
   & $3.25\std{1.67}$ & $17.35\std{2.69}$ & $\smse{0.21}{0.04}$ \\
& Optimal-WT
   & $3.34\std{1.50}$  & $17.44\std{3.18}$ & $\smse{0.36}{0.11}$
   & $5.07\std{1.72}$  & $19.18\std{2.73}$ & $\smse{0.27}{0.11}$ \\
& iPSD
   & \cellcolor{myred!20}{$3.97\std{0.64}$} & \cellcolor{myred!20}{$18.07\std{2.85}$} & \cellcolor{myred!20}{$55.14\std{27.62}$}
   & \cellcolor{myred!20}{$5.78\std{1.25}$} & \cellcolor{myred!20}{$19.90\std{2.38}$} & \cellcolor{myred!20}{$41.13\std{18.33}$} \\
& iPSD-Zero
   & \cellcolor{myblue!20}{$3.89\std{0.79}$} & \cellcolor{myblue!20}{$18.00\std{2.52}$} & \cellcolor{myblue!20}{$58.34\std{25.13}$}
   & \cellcolor{myblue!20}{$5.76\std{1.32}$} & \cellcolor{myblue!20}{$19.86\std{2.60}$} & \cellcolor{myblue!20}{$48.67\std{18.65}$} \\
\hline
\multirow{8}{*}{\rotatebox[origin=c]{90}{\shortstack{EMG}}}
&  mVMD
   & $-6.05\std{2.31}$ & $8.05\std{3.45}$ & $\smse{5.40}{1.14}$
   & $0.93\std{2.24}$ & $15.03\std{3.25}$ & $\smse{3.06}{0.97}$ \\
&  CEEMDAN
   & $0.25\std{1.31}$ & $14.35\std{2.75}$ & $\smsea{0.36}{0.14}$
   & $3.26\std{2.01}$ & $17.36\std{3.02}$ & $\smsea{0.22}{0.08}$ \\
&  FrWT
   & $-0.06\std{1.42}$ & $14.04\std{3.21}$ & $\smsea{0.55}{0.17}$
   & $3.53\std{1.43}$ & $17.63\std{3.12}$ & $\smsea{0.62}{0.21}$ \\
&  EMD-LoG
   & $0.32\std{1.63}$ & $14.42\std{3.20}$ & $\smsea{0.38}{0.13}$
   & $3.55\std{1.58}$ & $17.65\std{3.31}$ & $\smsea{0.22}{0.11}$ \\
&  WPT-ICA
   & $-0.09\std{1.22}$ & $14.01\std{2.47}$ & $\smsea{0.66}{0.13}$
   & $2.36\std{2.07}$ & $16.46\std{3.59}$ & $\smsea{0.31}{0.08}$ \\
& \citeauthor{chen2025self}
   & $-0.05\std{0.18}$ & $14.05\std{2.56}$ & $\smsea{0.42}{0.14}$
   & $3.01\std{1.77}$ & $17.12\std{3.23}$ & $\smsea{0.19}{0.09}$ \\
&  Optimal-WT
   & $0.33\std{2.40}$ & $14.42\std{3.69}$ & $\smsea{0.69}{0.14}$
   & $3.89\std{1.95}$ & $18.10\std{3.46}$ & $\smsea{0.43}{0.10}$ \\
&  iPSD
   & \cellcolor{myred!20}{$3.63\std{1.16}$} & \cellcolor{myred!20}{$17.73\std{2.44}$} & \cellcolor{myred!20}{$53.04\std{29.91}$}
   & \cellcolor{myred!20}{$5.95\std{1.31}$} & \cellcolor{myred!20}{$20.05\std{2.55}$} & \cellcolor{myred!20}{$39.21\std{17.17}$} \\
&  iPSD-Zero
   & \cellcolor{myblue!20} {$3.27\std{1.14}$} & \cellcolor{myblue!20}{$17.37\std{2.32}$} & \cellcolor{myblue!20}{$56.62\std{29.75}$}
   & \cellcolor{myblue!20}{$5.31\std{1.22}$} & \cellcolor{myblue!20}{$19.41\std{2.65}$} & \cellcolor{myblue!20}{$44.12\std{17.68}$} \\
\hline
\multicolumn{7}{l}{* The
  \colorbox{myred!20}{\raisebox{0pt}[0.5em][0em]{pink}} /
  \colorbox{myblue!20}{\raisebox{0pt}[0.5em][0em]{blue}}
  marker denote the best / second-best performance.}
\end{tabular}
\label{tab:snr_comp_full}
\end{table}

\begin{table}[t]
\newcommand{\std}[1]{{\scriptstyle \pm #1}}
\renewcommand{\arraystretch}{1.1}
\caption{Ablation on the iPSD partitioning strategy. Values are reported as mean $\pm$ standard deviation.}
\centering
\footnotesize
\begin{tabular}{cl ccc ccc}
\hline
& & \multicolumn{3}{c}{-5 dB Input SNR}
    & \multicolumn{3}{c}{0 dB Input SNR} \\ \cline{3-8}
& Method & SNR & PSNR & Spec. MSE
    & SNR & PSNR & Spec. MSE \\ \hline
\multirow{3}{*}{\rotatebox[origin=c]{90}{\shortstack{WGN}}}
 & ID
   & $2.58\std{0.95}$ & $16.68\std{2.92}$ & $62.33\std{37.51}$
   & $5.13\std{1.79}$ & $19.24\std{3.27}$ & $67.18\std{37.21}$ \\
 & iPSD
   & \cellcolor{myred!20}{$3.97\std{0.64}$} & \cellcolor{myred!20}{$18.07\std{2.85}$} & \cellcolor{myred!20}{$55.14\std{27.62}$}
   & \cellcolor{myred!20}{$5.78\std{1.25}$} & \cellcolor{myred!20}{$19.90\std{2.38}$} & \cellcolor{myred!20}{$41.13\std{18.33}$} \\
 & iPSD-Zero
   & \cellcolor{myblue!20}{$3.89\std{0.79}$} & \cellcolor{myblue!20}{$18.00\std{2.52}$} & \cellcolor{myblue!20}{$58.34\std{25.13}$}
   & \cellcolor{myblue!20}{$5.76\std{1.32}$} & \cellcolor{myblue!20}{$19.86\std{2.60}$} & \cellcolor{myblue!20}{$48.67\std{18.65}$} \\
\hline
\multirow{3}{*}{\rotatebox[origin=c]{90}{\shortstack{EMG}}}
 & ID
   & $1.62\std{1.00}$ & $15.73\std{2.78}$ & $67.68\std{25.53}$
   & $4.31\std{1.57}$ & $18.41\std{3.07}$ & $60.52\std{35.34}$ \\
 & iPSD
   & \cellcolor{myred!20}{$3.63\std{1.16}$} & \cellcolor{myred!20}{$17.73\std{2.44}$} & \cellcolor{myred!20}{$53.04\std{29.91}$}
   & \cellcolor{myred!20}{$5.95\std{1.31}$} & \cellcolor{myred!20}{$20.05\std{2.55}$} & \cellcolor{myred!20}{$39.21\std{17.17}$} \\
 & iPSD-Zero
   & \cellcolor{myblue!20}{$3.27\std{1.14}$} & \cellcolor{myblue!20}{$17.37\std{2.32}$} & \cellcolor{myblue!20}{$56.62\std{29.75}$}
   & \cellcolor{myblue!20}{$5.31\std{1.22}$} & \cellcolor{myblue!20}{$19.41\std{2.65}$} & \cellcolor{myblue!20}{$44.12\std{17.68}$} \\
\hline
\multicolumn{8}{l}{* The
  \colorbox{myred!20}{\raisebox{0pt}[0.5em][0em]{pink}} /
  \colorbox{myblue!20}{\raisebox{0pt}[0.5em][0em]{blue}}
  marker denote the best / second-best performance.}
\end{tabular}
\label{tab:abla_partition_full}
\end{table}

\subsection{Visual comparisons between iPSD and Optimal-WT}\label{ap:syn_qual}
\Cref{fig:syn_perf} uses example synthetic EEG segments to illustrate the denoising performance of both iPSD and iPSD-Zero. For reference, we also include the strongest baseline, Optimal-WT. The input signals are corrupted with synthetically added WGN or EMG artifacts, with input SNR levels set to either $-5$ dB or $0$ dB. For each input signal, we show the noisy input EEG (orange), the clean ground truth EEG (green), the Optimal-WT output (gray), the iPSD output (pink), and the iPSD-Zero output (blue). The lower right panel for each signal displays the PSDs of the original and denoised signals, allowing visual comparison of all the methods in the frequency domain.

\begin{figure}[t]
\centering
\includegraphics[width=\textwidth]{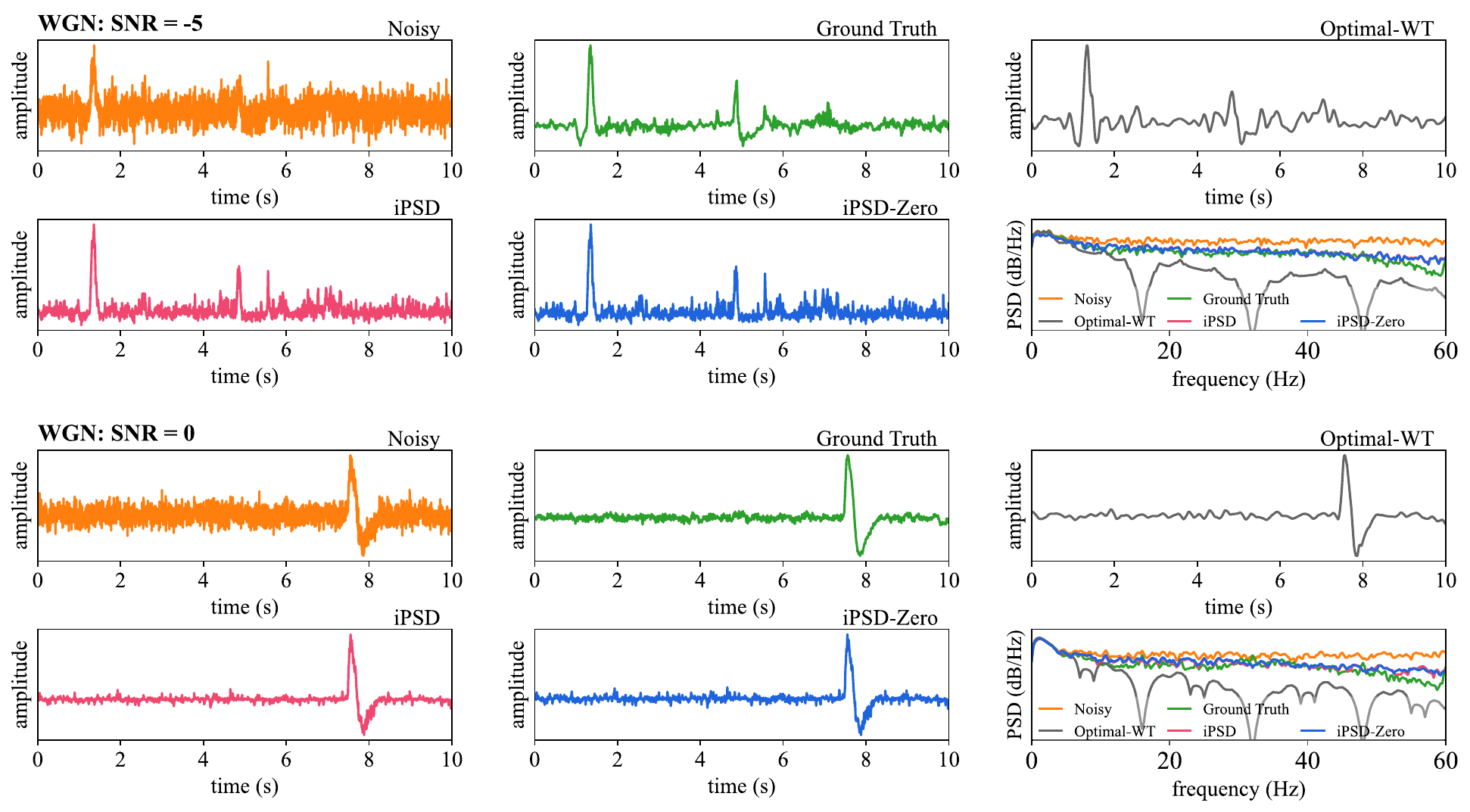}

\vspace{1em}

\includegraphics[width=\textwidth]{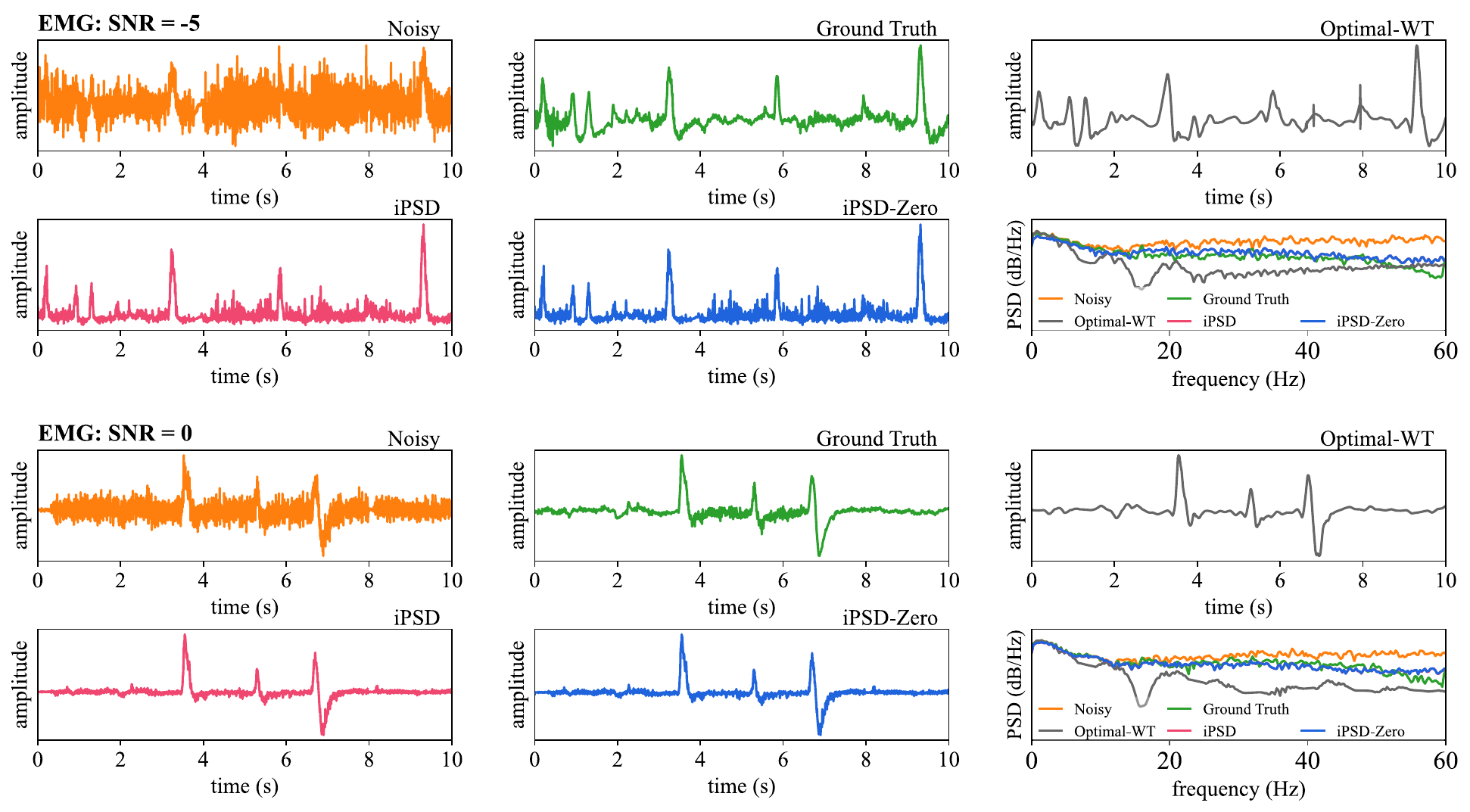}
\caption{Visualizing the denoising of example EEG segments contaminated by WGN and EMG artifacts at input SNRs of $-5$ dB and $0$ dB. For each example, the 6 panels show: the noisy input EEG (orange), the clean ground truth EEG (green), the Optimal-WT output (gray), the iPSD output (pink), the iPSD-Zero output (blue), and the corresponding PSDs (lower-right panels).}
\label{fig:syn_perf}
\end{figure}

Under WGN contamination, although the Optimal-WT reconstruction appears visually smooth in the time domain, it incurs substantial spectral distortion, sharply attenuating components above 10 Hz and introducing spurious dips at a few frequencies, resulting in the loss of physiologically meaningful neural activity. In contrast, both iPSD and iPSD-Zero faithfully recover the underlying clean EEG, preserving the temporal morphology (e.g., the ridges and spikes) and the spectral characteristics. This highlights the advantage of iPSD's expressive DL denoiser, which captures the underlying signal structure for effective denoising, over Optimal-WT's fixed decomposition strategy, which corrupts the original EEG content.

Under EMG contamination, the advantage of iPSD becomes even more pronounced. At both input SNR levels, the iPSD and iPSD-Zero outputs closely track the ground truth in the time domain, with EMG-induced artifacts substantially suppressed. The corresponding PSDs further confirm that iPSD selectively attenuates broadband EMG components while recovering the underlying EEG spectrum. In contrast, Optimal-WT fails to recover the clean signal, introducing spurious spikes in the time domain that are absent from the ground truth and causing noticeable distortion in the frequency domain. Overall, iPSD-Zero matches the performance of iPSD, making it well-suited for settings that demand immediate deployment without prior training.


\end{document}